\documentclass[lettersize,journal]{IEEEtran}
\usepackage{amsmath,amsfonts}
\usepackage{algpseudocode}
\usepackage{algorithm}
\usepackage{array}
\usepackage[caption=false,font=normalsize,labelfont=sf,textfont=sf]{subfig}
\usepackage{textcomp}
\usepackage{stfloats}
\usepackage{url}
\usepackage{verbatim}
\usepackage{graphicx}
\usepackage{cite}
\usepackage{hyperref}
\hyphenation{}

\usepackage{mathtools}
\usepackage{amsthm}
\newtheorem{theorem}{Theorem}[section]
\newtheorem{lemma}[theorem]{Lemma}

\theoremstyle{definition}

\theoremstyle{remark}

\makeatletter
\newcounter{phase}[algorithm]
\newlength{\phaserulewidth}
\newcommand{\setphaserulewidth}{\setlength{\phaserulewidth}}
\newcommand{\phase}[1]{%
  \vspace{-1.25ex}
  \Statex\leavevmode\llap{\rule{\dimexpr\labelwidth+\labelsep}{\phaserulewidth}}\vspace{-0.3ex}\rule{\linewidth}{\phaserulewidth}
  \Statex\strut\refstepcounter{phase}\vspace{-0.3ex}\textit{Part~\thephase~--~#1}
  \vspace{-1.25ex}\Statex\leavevmode\llap{\rule{\dimexpr\labelwidth+\labelsep}{\phaserulewidth}}\rule{\linewidth}{\phaserulewidth}}
\makeatother
\setphaserulewidth{.7pt}

\usepackage{graphicx}
\usepackage{setspace}
\usepackage{multicol}

\usepackage{makecell}
\usepackage{threeparttable} 


\begin{document}

\title{NN2Poly: A polynomial representation for deep feed-forward artificial neural networks}

\author{Pablo Morala, J.Alexandra Cifuentes, Rosa E. Lillo, Iñaki Ucar
\thanks{Manuscript received July 2022. This research is part of the I+D+i projects PID2019-104901RB-I00, PID2019-106811GB-C32 and PDC2022-133359-I00 founded by MCIN/ AEI/10.13039/501100011033/. (Corresponding author: Pablo Morala)}
\thanks{Pablo Morala, Rosa E. Lillo and Iñaki Ucar are with the Department of Statistics and the uc3m-Santander Big Data Institute, Universidad Carlos III de Madrid, Calle Madrid 126, 28903 Getafe (Madrid), Spain. E-mail: \url{pablo.morala@uc3m.es}; \url{rosaelvira.lillo@uc3m.es}; \url{inaki.ucar@uc3m.es}.}
\thanks{J. Alexandra Cifuentes is with ICADE, Department of Quantitative Methods, Faculty of Economics and Business Administration,  and the Institute for Research in Technology (IIT), ICAI School of Engineering,  Universidad Pontificia Comillas, Calle de Alberto Aguilera 23, 28015 Madrid, Spain. E-mail: \url{jacifuentes@comillas.edu}}}



\maketitle

\begin{abstract}
    Interpretability of neural networks and their underlying theoretical behavior remain an open field of study even after the great success of their practical applications, particularly with the emergence of deep learning. In this work, NN2Poly is proposed: a theoretical approach to obtain an explicit polynomial model that provides an accurate representation of an already trained fully-connected feed-forward artificial neural network (a multilayer perceptron or MLP). This approach extends a previous idea proposed in the literature, which was limited to single hidden layer networks, to work with arbitrarily deep MLPs in both regression and classification tasks. NN2Poly uses a Taylor expansion on the activation function, at each layer, and then applies several combinatorial properties to calculate the coefficients of the desired polynomials. Discussion is presented on the main computational challenges of this method, and the way to overcome them by imposing certain constraints during the training phase. Finally, simulation experiments as well as applications to real tabular data sets are presented to demonstrate the effectiveness of the proposed method.
\end{abstract}

\begin{IEEEkeywords}
    Polynomial Representation, Neural Networks, Machine Learning, Multilayer Perceptron, Interpretability, Multiset Partitions.
\end{IEEEkeywords}

\section{Introduction and related work}

While early neural networks were conceived several decades ago, the rise of deep learning has occurred mainly in the last decade \cite{lecunDeepLearning2015}. With the rise in the use of neural networks in their various forms, some research lines have been focused on solving their issues and limitations. Among some of the most addressed topics, the study of the interpretability and explainability of their behavior---aiming to solve their black-box nature \cite{benitezAreArtificialNeural1997, shwartz-zivOpeningBlackBox2017}---and the dimensioning of their structure, topology or hyperparameters---which is mainly done heuristically \cite{HIROSE199161, weymaereInitializationOptimizationMultilayer1994 , bengioPracticalRecommendationsGradientBased2012}---have attracted an increasing interest. Furthermore, it has been argued that the theoretical understanding of deep neural networks has not been developed at the same pace as their success in practical applications, resulting in the lack of a theory that explains their behavior satisfactorily~\cite{saxeInformationBottleneckTheory2019}.

In order to solve some of these problems, and to obtain a better understanding of the underlying aspects of these models, different efforts have been made in order to relate neural networks and deep learning models with more classical statistical tools. For example, the equivalence between kernel machines and neural networks has been explored \cite{leeDeepNeuralNetworks2018}, especially with the concept of Neural Tangent Kernel (NTK) \cite{jacotNeuralTangentKernel2018}, as well as their relation with general additive models \cite{agarwalNeuralAdditiveModels2021}. In an additional approach, LassoNet \cite{lemhadriLassoNetNeuralNetworks2021} integrates regularization with parameter learning to achieve feature sparsity.

Following this notion of joining traditional models with neural networks, the idea of using polynomials to explain the output of a related neural network has also been explored previously. In the work developed by Chen and Manry\cite{chenAnalysisDesignMultiLayer1998, chenBackpropagationRepresentationTheorem1990, chenConventionalModelingMultilayer1993}, polynomial basis functions are used to construct a relationship with a multilayer perceptron (MLP). It was presented in the context of the development of some of the universal approximation theorems in the 1990s\cite{cybenkoApproximationSuperpositionsSigmoidal1989, hornikMultilayerFeedforwardNetworks1989, hornikApproximationCapabilitiesMultilayer1991}, presenting an alternative to those theorems by using power series to obtain polynomials at each neuron, and then studying the process to obtain each term of the final polynomial through certain subnets from the original neural network. However, the coefficients of those power series are obtained by solving an optimization problem in a Least-Mean-Square approach.

Recently, the inherently explainable nature of polynomials as well as generative additive models (GAMs) has driven an increasing research interest in polynomial networks. In general, this kind of models benefits from deep learning ideas and methods directly in their training while keeping the interesting properties from polynomials or GAMs, such as their interpretability, as opposed to the opaqueness of neural network parameters. Furthermore, multiplicative interactions are a characteristic naturally present in polynomials, and have been proposed to have an important role in designing new neural network architectures \cite{jayakumarMultiplicativeInteractionsWhere2019}. Thus, for example, in \cite{chrysosDeepPolynomialNeural2022} Deep Polynomial Neural Networks (also named $\Pi$-networks) are presented as a model where the output is a high-order polynomial of the input, using high-order tensors obtained via tensor factorization. The computational cost of modeling all interactions is reduced with the use of factor sharing, following the ideas proposed in \cite{rendleFactorizationMachines2010}. Furthermore, the same authors of $\Pi$-networks had previously studied in \cite{chrysosPolyGANHighOrderPolynomial2019} similar ideas to present an alternative to Generative Adversarial Networks (GANs), PolyGAN, using high-order polynomials represented by high-order tensors. 

These lines of research have produced not only new ways and architectures to train polynomial-related neural networks, but also relevant studies about their  properties. In  a recent study \cite{fanExpressivityTrainabilityQuadratic2023},  the authors propose the use of quadratic networks as an alternative to traditional neural networks. These quadratic networks replace the standard inner product in the neurons with a quadratic function, allowing for an increased expressivity. To train these networks, the authors introduce a new strategy called ReLinear. Properties of polynomial neural networks are also explored in \cite{chorariaSpectralBiasPolynomial2022}, where spectral analysis is carried out to study the properties of Polynomial Neural Networks, with a particular focus on the relationship between spectral bias and the learning of higher frequencies. Finally, another recent development on this line is a new class of GAMs with tensor rank decompositions of polynomials \cite{dubeyScalableInterpretabilityPolynomials2022}, which claims to be able to model all higher-order feature interactions and defends the use of polynomial- and GAM-based solutions due to their interpretable nature.

This relation between neural networks and polynomials (in fact, polynomial regression) was also proposed in \cite{chengPolynomialRegressionAlternative2019}, where the authors conjecture that both models are equivalent. In this context, a method to obtain a polynomial representation from a given neural network with a single hidden layer and linear output was proposed in \cite{moralaMathematicalFrameworkInform2021}. This was the initial step where this paper relies on to propose NN2Poly, a theoretical approach that extends the previous work to arbitrarily deep fully-connected feed-forward neural networks, i.e., MLPs. In brief, the previous method from \cite{moralaMathematicalFrameworkInform2021} is used to obtain a polynomial representation at each unit in the first layer of an MLP, and from this point on, further polynomials are built iteratively, layer by layer, until the final output of the neural network is reached. The output is modeled as well by a final polynomial, in the regression case, or several ones in the classification case. Therefore, NN2Poly uses a direct combination of the weights of a given MLP, combined with a polynomial approximation of the activation function, and obtains an explicit expression for the polynomial coefficients, differing here from previous proposals.

The preceding research studies on polynomial neural networks have focused on the advantages of developing novel architectures and networks with a polynomial output that are specifically trained to achieve it. Our proposal distinguishes itself from these studies in that it can be employed on a pre-existing MLP to represent its internal parameters as a polynomial. As a result, our method can serve as an interpretability tool and provide insights into the patterns learned at each layer of the MLP. Particularly, this paper focuses on applications to tabular data sets, where the traditional statistical interpretation of polynomial coefficients is well-known and widely used.

Another relevant study to our proposal is the Analytically Modified Integral Transform Expansion (AMITE) \cite{sanchiricoAMITENovelPolynomial2021 }, which also proposes a polynomial expansion to analyze the non-linearities of neural networks. This innovative approach expands the activation functions using Chebyshev Polynomials. However, it is limited to a single hidden layer case, similar to \cite{moralaMathematicalFrameworkInform2021}, and hence, NN2Poly represents a significant advancement in the direction of generalizing it to deeper polynomials. This procedure constitutes a combinatorial problem, which means that the number of possible solutions grows exponentially with the number of features. This can limit the scalability of the method when the number of features is too high. To address this issue, we propose several solutions that can make the method more efficient and applicable to a wider range of problems.

The rest of the paper is organized as follows: Section \ref{section_notation} introduces some notation and definitions, and Section \ref{section_nn2poly_theoretical} presents the proposed method. Section \ref{section_practical_implementation} discusses practical implementation details and the considerations to overcome some of its main computational challenges. Finally, experimental results were obtained from simulations as well as from the application to two real tabular data sets.

\section{Notation and definitions}\label{section_notation}

\begin{table}[t]
    \centering
    \footnotesize
    \renewcommand{\arraystretch}{1.4}
    \caption{List of Symbols}
    \label{tab:symbols}
    \begin{tabular}{r|p{0.75\linewidth}}
    \hline
    \textbf{ Symbol} & \textbf{Description}\\
    \hline
    $L$ & Number of hidden layers $+ 1$.\\
    $\prescript{(l)}{}{y}_j$ & Output at layer $l$ and neuron $j$.\\
    $\prescript{(l)}{}{g}_j$ & Activation function at layer $l$.\\
    $\prescript{(l)}{}{w}_{ij}$ & Weight connecting neuron $i$ at layer $l-1$ with neuron $j$ at layer $l$.\\
    $x_i$ & Original variable $i$.\\
    $p$ & Dimension (number of variables).\\
    $q_l$ & Taylor expansion order at layer $l$.\\
    $Q$ & Polynomial order at layer $l$.\\
    $N_{p,Q}$ & Number of terms in a polynomial with $p$ variables and order $Q$.\\
    $\prescript{(l)}{\mathrm{in/out}}{P}_j$ & Input/Output polynomial at layer $l$ and neuron $j$.\\
    ${B}_{k}$ & Monomial $k$ inside a polyniamial.\\
    ${\beta}_{k}$ & Coefficient $k$ for the monomial $B_k$ inside a polynomial.\\
    $M$ & A multiset.\\
    $\mathcal{M}(p,Q)$ & set of all possible multisets $M$ for a given number of variables $p$ and total order $Q$.\\
    $\Vec{t}$ & Vector representation of the multiplicities of all variables in a  polynomial term / coefficient.\\
    $\pi(\Vec{t},Q,n)$ & Set of allowed vectors that satisfy the conditions from Lemma \ref{lemma_taylor_poly}, for the combinations of variables given by $\Vec{t}$, $Q$ and $n$.\\
    \hline
    \end{tabular}
\end{table}

\subsection{Neural Network}

Table \ref{tab:symbols} summarizes and defines the main symbols used throughout the paper. In  this work, we consider fully-connected feed-forward neural networks---also often referred to as Multilayer Perceptrons (MLPs)---with $L-1$ hidden layers and $h_{l}$ neurons at each layer $l$. The input variables to the network are denoted by $\Vec{x} = (x_1, \dots, x_p)$, with dimension $p$, and the output response is $\Vec{y} = (y_1, \dots, y_c)$, with $c=1$, if there is a single output (usual regression setting), and $c>1$ if there is more than one output (usual classification setting with $c$ classes). At any given layer $l$ and neuron $j$, its output is $\prescript{(l)}{}{y}_j$, while its inputs are the outputs from the previous layer, $\prescript{(l-1)}{}{y}_i$ for $i \in 1, \dots, h_{l-1}$. By definition, the input to the first hidden layer is the network's input, $\prescript{(0)}{}{y}_i = x_i$ for $i \in 1, \dots, p$. The network's final output is denoted by $\prescript{(L)}{}{y}_j$ for $i \in 1, \dots, c$. $\prescript{(l)}{}{W}$ denotes the weights matrix connecting layer $l-1$ to layer $l$, where its element at row $i$ and  column $j$ is denoted by $\prescript{(l)}{}{w}_{ij}$, and $\prescript{(l)}{}{g}$ is the activation function at that layer. Then, the output from each neuron $j = 1, \dots, h_l$ at layer $l$ can be written as follows:
\begin{equation}
    \prescript{(l)}{}{y}_j=\prescript{(l)}{}{g}\left(\prescript{(l)}{}{u}_j\right)=\prescript{(l)}{}{g}\left(\sum_{i=0}^{h_{l-1}}\prescript{(l)}{}{w}_{ij}\prescript{(l-1)}{}{y}_i\right),
    \label{eq_neuron_computation}
\end{equation}
where $\prescript{(l)}{}{u}_j$ is the synaptic potential, i.e., the value computed at the neuron before applying the activation function. Note also that the matrix $\prescript{(l)}{}{W}$ has dimensions $(h_{l-1} + 1)\times h_{l}$, including the bias term $\prescript{(l-1)}{}{y}_0 = 1$.

\subsection{Polynomials}

The main goal of the NN2Poly method described in Section \ref{section_nn2poly_theoretical} is to obtain a polynomial that approximately matches the predictions of a given pre-trained MLP. To this end, $P$ will denote a general polynomial of order $Q$ in $p$ variables, considering all possible interactions of variables up to this order $Q$. The cardinality of the polynomial terms can be computed as:
\begin{equation}
    N_{p,Q}=\sum_{T=0}^{Q}{{p+T-1}\choose{T}},
    \label{eq_M}
\end{equation}
where $T$ represents the order of each possible term or monomial, from $0$ to $Q$. The term ${{p+T-1}\choose{T}}$ represents the number of possible combinations when taking $T$ elements (the terms of the monomial) from a set with $p$ elements (the variables) with repetition where the order does not matter, i.e., the number of possible monomials in $p$ variables of order $T$. The vector of coefficients can then be defined as $\Vec{\beta}=(\beta_1, \dots, \beta_{N_{p,Q}})$, where $\beta_k$ is the coefficient associated to a specific monomial. Furthermore, each monomial will be denoted as $B_k$, which will contain $\beta_k$ and its associated combination of variables. Therefore, the polynomial can be represented explicitly as:
\begin{equation}
    P = \sum_{k=1}^{N_{p,Q}} B_k.
    \label{eq_poly_k}
\end{equation}

An alternative representation that is explicit about the variables considered in each monomial will be needed later. Each monomial of order $T$ can be represented by a vector $\Vec{t}=(t_1,t_2,\dots,t_p)$, where each element $t_i$ is an integer that represents the number of times that variable $i$ appears in the monomial, i.e., the multiplicity of that variable, for all $i=1,\dots,p$. Note that $T=\sum_{i=1}^{p} t_i$. As an example, the monomial containing the combination of variables $x_1^2 x_2 x_4$ in a polynomial with $p=4$ will be:
\begin{equation*}
    B_{(2,1,0,1)} = \beta_{(2,1,0,1)} \cdot x_1^2 x_2 x_4.
\end{equation*}
Using this representation, Eq. \ref{eq_poly_k} can be written using the multiplicities as:
\begin{equation}
    P = \sum_{\Vec{t} \in \mathcal{T}(p,Q)} B_{\Vec{t}} = \sum_{\Vec{t} \in \mathcal{T}(p,Q)} \beta_{\Vec{t}} \cdot x_{1}^{t_1}\dots x_{p}^{t_p},
    \label{eq_poly_vec_t}
\end{equation}
where $\mathcal{T}(p,Q)$ is the set of all possible vectors $\Vec{t}$, for a given number of variables $p$ and total order $Q$, and considering all interactions. Note that its cardinality is $|\mathcal{T}(p,Q)|=N_{p,Q}$. For convenience, the intercept will be denoted as $\beta_{(0)}$.

A bijection exists between the explicit $k$ index notation (Eq. \ref{eq_poly_k}) and the multiplicites  $\Vec{t}$ vector notation (Eq. \ref{eq_poly_vec_t}), and both will be used when developing the NN2Poly algorithm, according to the specific needs at each step. However, the explicit details of this bijection do not need to be defined to develop the algorithm.

\subsection{Multisets and multiset partitions}
\label{section_multiset_definition}

As NN2Poly will build polynomial terms based on sums and products of terms from previous polynomials, a combinatorial problem will arise when identifying the previous terms that are involved in the construction of an specific term of the new polynomial. In this context, the concepts of a multiset and its possible partitions have to be introduced.

A multiset $M$ is a set where its elements may appear more than once, i.e., they may have a multiplicity greater than 1. An example would be $M=\{1,1,2,3\}$, which is formed by elements $\{1\}$, $\{2\}$, $\{3\}$, with multiplicities $2$, $1$, $1$ respectively. 

A partition $V_r$ of a multiset $M$ (also known as multipartition) is a combination of $R_r$ multisets $V_{r,i}$ for $i=1,\dots, R_r$ such that its union forms the initial multiset. Following the previous example, the multiset $M=\{1,1,2,3\}$ can be partitioned into  $V_1 = \{\{1,2\},\{1,3\}\}$, which has $R_1=2$ elements, but also into $V_2 = \{\{1,1\},\{2\},\{3\}\}$, which has $R_2=3$ elements. As can be seen, there are many possible different partitions of a multiset. All possible partitions of the example $M=\{1,1,2,3\}$ are:
\begin{multicols}{2}
    \begin{itemize}
        \item $\{1,1,2,3\}$
        \item $\{1\},\{1,2,3\}$
        \item $\{2\},\{1,1,3\}$
        \item $\{3\},\{1,1,2\}$
        \item $\{1,1\},\{2,3\}$
        \item $\{1,2\},\{1,3\}$
        \item $\{1\},\{1\},\{2,3\}$
        \item $\{1\},\{2\},\{1,3\}$
        \item $\{1\},\{3\},\{1,2\}$
        \item $\{2\},\{3\},\{1,1\}$
        \item $\{1\},\{1\},\{2\},\{3\}$
    \end{itemize}
\end{multicols}

Obtaining all the possible partitions of a given multiset still presents some open problems and questions that remain unanswered \cite{knuthArtComputerProgramming2005}. Appendix \ref{appendix_multiset} contains the algorithm used to compute them, extracted from \cite{knuthArtComputerProgramming2005}.

Using these concepts, a third way of representing a given term in the polynomial can be defined by means of multisets. Using the $\Vec{t}$ vector notation (Eq. \ref{eq_poly_vec_t}), given the term $B_{\Vec{t}}$ where $\Vec{t}=(t_1,t_2,\dots,t_p)$, a multiset can be associated with it in the following manner: 
\begin{equation}
    M = \left\{\underbrace{1,\dots,1}_{t_1}, \underbrace{2,\dots,2}_{t_2}, \dots, \underbrace{p,\dots,p}_{t_p}\right\},
    \label{eq_multiset_M_from_t}
\end{equation}
where each variable $i$ appears $t_i$ times in the multiset. As an example, using the multiset notation, the monomial containing the combination of variables $x_1^2 x_2 x_4$ in a polynomial, with $p=4$, will be
\begin{equation*}
    B_{\{1,1,2,4\}} = \beta_{\{1,1,2,4\}} \cdot x_1^2 x_2 x_4.
\end{equation*}
Therefore, an alternative way of representing a polynomial as in Eq. \ref{eq_poly_k} and Eq. \ref{eq_poly_vec_t}, now using the multisets, is:
\begin{equation}
    P = \sum_{M \in \mathcal{M}(p,Q)} B_{M},
    \label{eq_poly_multiset}
\end{equation}
where $\mathcal{M}(p,Q)$ is the set of all possible multisets $M$ for a given number of variables $p$ and total order $Q$, considering all interactions. Note that its cardinality is, again, $|\mathcal{M}(p,Q)|=N_{p,Q}$. By definition, there exists a bijection that relates Eq. \ref{eq_poly_multiset} with Eq. \ref{eq_poly_vec_t}, and by the transitive property, also with Eq. \ref{eq_poly_k}. Therefore, the three representations of $P$ are equivalent and they will be used at different steps in Section \ref{section_nn2poly_theoretical}. Table \ref{tab:polynomial_notation} summarizes these three equivalent notations in order to get a look at the three representations of $P$.

\begin{table*}[t]
\centering
\footnotesize
\renewcommand{\arraystretch}{1.2}
\begin{threeparttable}
\caption{Polynomial notations.}
\label{tab:polynomial_notation}
\begin{tabular*}{0.85\textwidth}{r|l|l}
  \hline
  \textbf{Name (index)} & \textbf{Polynomial} & \textbf{Monomial example: $x_1^2 x_2 x_4$}  \\
  \hline
  \textbf{Explicit ($k$)} & 
  $P = \sum_{k=1}^{N_{p,Q}} B_k$
  & $ B_{k} = \beta_{k} \cdot x_1^2 x_2 x_4$, \\
  \textbf{Multiplicity ($\Vec{t}$)} & 
  $P = \sum_{\Vec{t} \in \mathcal{T}(p,Q)} B_{\Vec{t}} = \sum_{\Vec{t} \in \mathcal{T}(p,Q)} \beta_{\Vec{t}} \cdot x_{1}^{t_1}\dots x_{p}^{t_p}$
  & $B_{(2,1,0,1)} = \beta_{(2,1,0,1)} \cdot x_1^2 x_2^1 x_3^0x_4^1$ \\
  \textbf{Multiset ($M$)} & 
  $P = \sum_{M \in \mathcal{M}(p,Q)} B_{M}$
  & $ B_{1,1,2,4} = \beta_{1,1,2,4} \cdot x_1^2 x_2 x_4$ \\
  \hline
\end{tabular*}
\begin{tablenotes}\footnotesize
\item \emph{Note:} In the explicit notation, $k$ represents an explicit and unique index for each term in the given polynomial $P$, which would change depending on the number of variables and polynomial degree.
\end{tablenotes}
\end{threeparttable}
\end{table*}

\section{NN2Poly: Theoretical development}
\label{section_nn2poly_theoretical}

In this Section, the NN2Poly method is proposed and justified. An inductive argument will be employed, which will inspire the iterative implementation of the method discussed in Section \ref{section_practical_implementation}.

Consider a trained MLP with $L-1$ hidden layers where each neuron is computed as in Eq. \ref{eq_neuron_computation}. Then, the final objective of the NN2Poly method is to obtain a polynomial of the form presented in Eq. \ref{eq_poly_k}, or in its alternative representations, for each neuron at output layer $L$. In order to do so, an iterative approach will be employed, which will yield two polynomials at each neuron $j$ and layer $l$, one as the input to the non-linearity or activation function (denoted by the prefix ${\mathrm{in}}$), and a second one as the output (denoted by the prefix ${\mathrm{out}}$):
\begin{align}
    \prescript{(l)}{\mathrm{in}}{P}_j &=\sum_{\Vec{t} \in \mathcal{T}(p,Q)} \prescript{(l)}{\mathrm{in}}{B}_{j,\Vec{t}}\\ \prescript{(l)}{\mathrm{out}}{P}_j &=\sum_{\Vec{t} \in \mathcal{T}(p,Q)} \prescript{(l)}{\mathrm{out}}{B}_{j,\Vec{t}},
\end{align}
following the notation from Eq. \ref{eq_poly_vec_t}. Note that the two polynomials at the same layer will be related by the application of the activation function at such a layer, $\prescript{(l)}{}{g}()$, as follows:
\begin{equation}
    \prescript{(l)}{\mathrm{out}}{P}_j\approx \prescript{(l)}{}{g}\left(\prescript{(l)}{\mathrm{in}}{P}_j\right).
\end{equation}

In order to find this relationship, a Taylor expansion and several combinatorial properties will be exploited to obtain a polynomial in the original variables $x_1,\dots, x_p$ from the expansion of the non-linearity $g()$ applied to another polynomial in the same variables. This follows the idea used in \cite{moralaMathematicalFrameworkInform2021} to approximate the non-linearity of a single hidden layer NN, with the notable difference that the input to $g()$ in that case was a linear combination of the original variables instead of a higher order polynomial. Lemma \ref{lemma_taylor_poly} shows the way to obtain the coefficients of this polynomial. Note that, without loss of generality, the indices $l$ and $j$ denoting the layer and neuron can be omitted in the Lemma for the shake of simplicity.

\begin{lemma}
    Let $\prescript{}{\mathrm{in}}{P}$ be a polynomial of order $Q$ in $p$ variables and $g()$ a k-times differentiable function. Applying a Taylor expansion around $0$ and up to order $q$ yields a new polynomial $\prescript{}{\mathrm{out}}{P}$ in the same variables, of order $Q \times q$, whose coefficients are given by
    \begin{equation}
        \prescript{}{\mathrm{out}}{\beta}_{\vec{t}} = 
        \sum_{n=0}^{q}\dfrac{g^{(n)}(0)}{n!}
        \sum_{ \Vec{n} \in \pi(\Vec{t},Q,n)}
        \left(\begin{array}{c} n \\ \Vec{n} \end{array}\right)\prod_{k=1}^{N_{p,Q}} \prescript{}{\mathrm{in}}{\beta}_{k}^{n_{k}},
    \end{equation}
    for all $\Vec{t} \in \mathcal{T}(p,Q \times q)$, where $\pi(\Vec{t},Q,n)$ is the set of vectors $\Vec{n}=(n_1,\dots, n_{N_{p,Q}})$ that represents all the possible combinations of terms $\prescript{}{\mathrm{in}}{\beta}$ that are needed to obtain $\prescript{}{\mathrm{out}}{\beta}$, satisfying the following conditions:
    \begin{itemize}
    \item \textbf{Condition 1}: $n_{1}+\dots+n_{M}=n$. 
    \item \textbf{Condition 2}: For all $k$, the order $T_k$ of the monomial $\prescript{}{\mathrm{in}}{B}_k$ associated to $\prescript{}{\mathrm{in}}{\beta}_k$  must satisfy that $T_k  \le Q$.
\end{itemize}
    \label{lemma_taylor_poly}
\end{lemma}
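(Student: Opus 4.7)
The plan is to write $\prescript{}{\mathrm{out}}{P}$ as the Taylor polynomial of $g$ applied to $\prescript{}{\mathrm{in}}{P}$, expand each resulting power via the multinomial theorem, and then regroup the contributions by the target monomial $\Vec{t}$. Concretely, I would first use the Taylor expansion of $g$ around $0$ up to order $q$ to write
\begin{equation*}
    \prescript{}{\mathrm{out}}{P} = \sum_{n=0}^{q} \frac{g^{(n)}(0)}{n!} \left(\prescript{}{\mathrm{in}}{P}\right)^{n},
\end{equation*}
which accounts for the outer sum and the factor $g^{(n)}(0)/n!$ in the claimed formula. Since $\prescript{}{\mathrm{in}}{P}$ has degree $Q$ and is raised to at most the $q$-th power, the resulting polynomial has degree at most $Q \times q$, as stated.

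Next, I would apply the multinomial theorem to each $\left(\prescript{}{\mathrm{in}}{P}\right)^{n} = \left(\sum_{k=1}^{N_{p,Q}} \prescript{}{\mathrm{in}}{B}_{k}\right)^{n}$, obtaining
\begin{equation*}
    \left(\prescript{}{\mathrm{in}}{P}\right)^{n} = \sum_{n_{1} + \dots + n_{N_{p,Q}} = n} \binom{n}{\Vec{n}} \prod_{k=1}^{N_{p,Q}} \left(\prescript{}{\mathrm{in}}{B}_{k}\right)^{n_{k}},
\end{equation*}
which directly produces both the multinomial coefficient $\binom{n}{\Vec{n}}$ and Condition 1. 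Writing each $\prescript{}{\mathrm{in}}{B}_{k} = \prescript{}{\mathrm{in}}{\beta}_{k} \cdot x_{1}^{t_{k,1}} \cdots x_{p}^{t_{k,p}}$ and taking the product over $k$, the coefficient part becomes $\prod_{k} \prescript{}{\mathrm{in}}{\beta}_{k}^{n_{k}}$, while the monomial part has exponent vector $\sum_{k} n_{k} \Vec{t}_{k}$. The set $\pi(\Vec{t}, Q, n)$ is then, by definition, the collection of $\Vec{n}$ satisfying Condition 1 together with the matching constraint $\sum_{k} n_{k} \Vec{t}_{k} = \Vec{t}$; Condition 2 ($T_{k} \le Q$) is automatic from the fact that the index $k$ only runs over the monomials of the order-$Q$ polynomial $\prescript{}{\mathrm{in}}{P}$.

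Collecting all contributions to a fixed $\Vec{t} \in \mathcal{T}(p, Q \times q)$ across $n = 0, \dots, q$ and $\Vec{n} \in \pi(\Vec{t}, Q, n)$ then yields the claimed formula for $\prescript{}{\mathrm{out}}{\beta}_{\Vec{t}}$. The main obstacle I expect is the combinatorial bookkeeping: translating between the $k$-indexed representation of Eq. \ref{eq_poly_k}, used naturally by the multinomial theorem, and the $\Vec{t}$-indexed representation of Eq. \ref{eq_poly_vec_t}, used to state the target coefficient. In particular, one must verify that the matching constraint $\sum_{k} n_{k} \Vec{t}_{k} = \Vec{t}$ selects exactly those products $\prod_{k} \prescript{}{\mathrm{in}}{B}_{k}^{n_{k}}$ that land on the monomial $x_{1}^{t_{1}} \cdots x_{p}^{t_{p}}$, so that no contribution to $\prescript{}{\mathrm{out}}{\beta}_{\Vec{t}}$ is missed or double-counted when the different $n$ are summed together.
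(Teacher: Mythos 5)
Your proposal is correct and follows the same skeleton as the paper's proof: Taylor expansion of $g$ around $0$ up to order $q$, then the multinomial theorem applied to $\left(\sum_k \prescript{}{\mathrm{in}}{B}_k\right)^n$, which produces the factor $g^{(n)}(0)/n!$, the multinomial coefficient, and Condition~1, followed by regrouping the products $\prod_k \prescript{}{\mathrm{in}}{B}_k^{n_k}$ by the target monomial $\Vec{t}$. Where you diverge is in the final regrouping step, which is the combinatorial heart of the lemma. You characterize $\pi(\Vec{t},Q,n)$ directly by the exponent-matching equation $\sum_k n_k \Vec{t}_k = \Vec{t}$ over the index set of monomials of $\prescript{}{\mathrm{in}}{P}$, and correctly observe that Condition~2 is then automatic. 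The paper instead identifies the contributing vectors $\Vec{n}$ with the partitions of the multiset $M$ associated to $\Vec{t}$: each block of a partition is a candidate monomial of $\prescript{}{\mathrm{in}}{P}$ and $n_k$ records how many blocks equal the $k$-th monomial. In that parameterization the candidate set is generated from $M$ itself (whose order can be as large as $Q\times q$), so partitions containing a block of order greater than $Q$ genuinely arise and must be excluded, which is why Condition~2 is stated as a nontrivial restriction. The two descriptions of $\pi(\Vec{t},Q,n)$ are equivalent, and yours is arguably the cleaner way to prove the identity; what the paper's multiset-partition view buys is the operational recipe actually used in Section~\ref{section_multiset_computation} and Appendix~\ref{appendix_multiset}, where the coefficients are computed by enumerating multiset partitions with Algorithm~\ref{alg_knuth} and then filtering by Conditions~1 and~2. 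If you wanted your argument to support that implementation, you would add one sentence establishing the bijection between solutions of $\sum_k n_k \Vec{t}_k = \Vec{t}$ with all $T_k \le Q$ and the admissible partitions of $M$.
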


\begin{proof}
Applying Taylor expansion to function $g$ around $0$ and up to order $q$:
\begin{align}
        g\left(\prescript{}{\mathrm{in}}{P}\right) &= g\left(\sum_{\Vec{t} \in \mathcal{T}(p,Q)} \prescript{}{\mathrm{in}}{B}_{\Vec{t}}\right) \nonumber\\ &\approx\sum_{n=0}^{q}\dfrac{g^{(n)}(0)}{n!}\left(\sum_{\Vec{t} \in \mathcal{T}(p,Q)} \prescript{}{\mathrm{in}}{B}_{\Vec{t}}\right)^n.
    \label{eq_taylor_proof}
\end{align}

Using the $k$ notation from Eq. \ref{eq_poly_k}, the previous equation can be rewritten as:
\begin{equation}
        g\left(\prescript{}{\mathrm{in}}{P}\right) \approx \sum_{n=0}^{q}\dfrac{g^{(n)}(0)}{n!}\left(\sum_{k=0}^{N_{p,Q}} \prescript{}{\mathrm{in}}{B}_{k}\right)^n,
\end{equation} where $N_{p,Q}$ is the total number of terms in the polynomial. Then, the last term can be expanded using the multinomial theorem as follows:
\begin{multline}
    \left(\sum_{k=0}^{N_{p,Q}} \prescript{}{\mathrm{in}}{B}_{k}\right)^n =
    \sum_{n_{1}+\dots+n_{N_{p,Q}}=n}
    \left(\begin{array}{c} n \\ \Vec{n} \end{array}\right) \prod_{k=1}^{N_{p,Q}}\prescript{}{\mathrm{in}}{B}_{k}^{n_k},
\end{multline}
where vector $\Vec{n} = (n_1,\dots, n_{N_{p,Q}})$ denotes a vector with non negative integer components, and the multinomial coefficient is:
\begin{equation}\label{eq_multinomial_coefficient}
    \left(\begin{array}{c} n \\ \Vec{n} \end{array}\right)
= \frac{n!}{n_{1}! n_{2}! \cdots n_{N_{p,Q}}!}.
\end{equation}
Note that the summation is computed over all vectors $\Vec{n}$, such that the sum of all its components is $n$. Then, the expansion of $g()$ is:
\begin{equation}
    g\left(\prescript{}{\mathrm{in}}{P}\right) =
    \sum_{n=0}^{q}\dfrac{g^{(n)}(0)}{n!}
    \sum_{n_{1}+\dots+n_{N_{p,Q}}=n}
    \left(\begin{array}{c} n \\ \Vec{n} \end{array}\right) \prod_{k=1}^{N_{p,Q}}\prescript{}{\mathrm{in}}{B}_{k}^{n_k}.
    \label{eq_multinomial_expansion_proof}
\end{equation}

Eq. \ref{eq_multinomial_expansion_proof} consists of sums and products of constants and terms $\prescript{}{\mathrm{in}}{B}_{k}^{n_k}$, which contain monomials of the original variables per Eq. \ref{eq_poly_k}, so it is a polynomial in the original variables. The order of the polynomial will be given by the highest order term, which will be obtained when $n_k=n=q$ for some $k$ such that $\prescript{}{\mathrm{in}}{B}_{k}$ is a monomial of order $Q$ from the original polynomial, which will yield a term of order $Q \times q$ in the output polynomial. 

To obtain the coefficients $\prescript{}{\mathrm{out}}{\beta}_{\vec{t}}$, the polynomial  has to be divided into terms associated with each combination of variables. For a given coefficient $\prescript{}{\mathrm{out}}{\beta}_{\vec{t}}$, all the products of terms $\prescript{}{\mathrm{in}}{B}$ that would yield the combination of variables determined by $\Vec{t}$ need to be considered. At this point, it is convenient to switch to the multiset notation $M$ from Eq. \ref{eq_poly_multiset}. As an illustrative example, consider the coefficient $\prescript{}{\mathrm{out}}{\beta}_{(2,1,0,1)}$ (vector $\Vec{t}$ notation), which is the same as $\prescript{}{\mathrm{out}}{\beta}_{\{1,1,2,4\}}$ (multiset $M$ notation), associated to the combination $x_1^2x_2x_4$. This combination of variables is included in the term $\prescript{}{\mathrm{in}}{B}_{\{1,1,2,4\}}$, but also in the product of terms $\prescript{}{\mathrm{in}}{B}_{\{1,1\}}\prescript{}{\mathrm{in}}{B}_{\{2,4\}}$ or in the product of the four terms $\prescript{}{\mathrm{in}}{B}_{\{1\}}^2 \prescript{}{\mathrm{in}}{B}_{\{2\}} \prescript{}{\mathrm{in}}{B}_{\{4\}}$. 
Finding all the possible combinations is equivalent to consider all possible partitions of the multiset $M$ associated with $\Vec{t}$. From the previous example, it is clear that $V_1 = \{\{1,1\},\{2,4\}\}$ and $V_2 = \{\{1\}, \{1\}, \{2\}, \{4\}\}$ are both possible partitions of the multiset $M=\{1,1,2,4\}$.

Then, a vector $\Vec{n}$ from Eq. \ref{eq_multinomial_expansion_proof} can be associated with each partition $V_r$ of the multiset $M$. This vector $\Vec{n}$ will contain the multiplicity of term $\prescript{}{\mathrm{in}}{B}_{V_{r,i}}$ at position $n_k$ where $k$ is the index (using notation from Eq. \ref{eq_poly_k}) associated with the multiset $V_{r,i}$ (using notation from Eq. \ref{eq_poly_multiset}) from partition $V_{r}$. This can be done for all $i=1,\cdots,R_{r}$, where $R_{r}$ is the number of elements (multisets) in partition $V_{r}$. Then, for each vector $\vec{n}$, the product $\prod_{k=1}^{N_{p,Q}}\prescript{}{\mathrm{in}}{B}_{k}^{n_k}$ will yield the desired combinations of original variables, as it will contain the product of terms determined by the chosen partition. However, not all partitions of the multiset are allowed, and therefore, vectors $\vec{n}$ are restricted by two different conditions:
\begin{itemize}
    \item \textbf{Condition 1}: $n_{1}+\dots+n_{M}=n$. This limits the number of elements in a partition to be $n$, as its the sum of the multiplicities $n_k$ contained in $\Vec{n}$. Note that $n$ goes from $0$ to $q$, the truncation order for the Taylor expansion. \textbf{Example}: consider the term $\prescript{}{\mathrm{out}}{B}_{\{1,1,2,3\}}$, with the associated multiset $\{1,1,2,3\}$; let $n=q=2$; then, partition $\{1\},\{1\},\{2\},\{3\}$ would not be allowed by this condition, as it contains 4 elements and $n_{1}+\dots+n_{M}= 4 \not= 2 = n$.
    
    \item \textbf{Condition 2}: For all $k$, the order $T_k$ of its associated monomial $\prescript{}{\mathrm{in}}{B}_k$ must satisfy that $T_k  \le Q$. This ensures that the order of the monomial, represented by each element in the partition, does not exceed the order $Q$ of the original polynomial. \textbf{Example}: let $Q=2$, $q=2$; the new polynomial $\prescript{}{\mathrm{out}}{P}$ will be of order $Q \times q = 4$, and it will contain the term $\prescript{}{\mathrm{out}}{B}_{\{1,1,2,3\}}$, with the associated multiset $\{1,1,2,3\}$; then partition $\{1\},\{1,2,3\}$ would not be allowed by this condition, as it contains the element $\{1,2,3\}$, associated with the monomial $\prescript{}{\mathrm{in}}{B}_{\{1,2,3\}}$ of order 3, which is not included in the polynomial $\prescript{}{\mathrm{in}}{P}$ of order $Q=2$.
\end{itemize}

The allowed set of vectors $\Vec{n}$ that represent the valid partitions of the multiset $M$ determined by $\Vec{t}$, given the values $n$ and $Q$ and satisfying both previous restrictions will be denoted as $\pi(\Vec{t},Q,n)$. Then, the expression for each term in $\prescript{}{\mathrm{out}}{P}$ is:
\begin{equation}
    \prescript{}{\mathrm{out}}{B}_{\vec{t}} = 
    \sum_{n=0}^{q}\dfrac{g^{(n)}(0)}{n!}
    \sum_{ \Vec{n} \in \pi(\Vec{t},Q,n)}
    \left(\begin{array}{c} n \\ \Vec{n} \end{array}\right)\prod_{k=1}^{N_{p,Q}} \prescript{}{\mathrm{in}}{B}_{k}^{n_{k}},
\end{equation}
which leads to the expression for the coefficients:
\begin{equation}
    \prescript{}{\mathrm{out}}{\beta}_{\vec{t}} = 
    \sum_{n=0}^{q}\dfrac{g^{(n)}(0)}{n!}
    \sum_{ \Vec{n} \in \pi(\Vec{t},Q,n)}
    \left(\begin{array}{c} n \\ \Vec{n} \end{array}\right)\prod_{k=1}^{N_{p,Q}} \prescript{}{\mathrm{in}}{\beta}_{k}^{n_{k}}.
    \label{eq_final_coeff_lemma}
\end{equation}

\end{proof}

After Lemma \ref{lemma_taylor_poly} has been evaluated, it can be used to define the NN2Poly algorithm in an iterative manner. For any given neuron $j$, in the first hidden layer ($l=1$), its activation potential is $\prescript{(1)}{}{u}_j = \sum_{i=0}^{p}\prescript{(1)}{}{w}_{ij}\prescript{(0)}{}{y}_{i} =  \sum_{i=0}^{p}\prescript{(1)}{}{w}_{ij}x_i$. It is clear that this is a polynomial of order 1, as it is a linear combination of the original variables, and thus it will be denoted by $\prescript{(1)}{\mathrm{in}}{P}_j$. Then, the non-linearity is applied to this activation potential:
\begin{equation*}
    \prescript{(1)}{}{y}_j=\prescript{(1)}{}{g}\left(\prescript{(1)}{}{u}_j\right)=\prescript{(1)}{}{g}\left(\prescript{(1)}{\mathrm{in}}{P}_j\right) \approx \prescript{(1)}{\mathrm{out}}{P}_j,
\end{equation*}
where Lemma \ref{lemma_taylor_poly} has been applied in the last step to obtain the explicit coefficients of a new output polynomial. Then, the activation potential at the next layer, $l=2$, is a linear combination of polynomials, which yields another polynomial: 
\begin{equation*}
    \prescript{(2)}{}{u}_j = \sum_{i=0}^{h_1}\prescript{(2)}{}{w}_{ij}\prescript{(1)}{}{y}_i \approx
    \sum_{i=0}^{h_1}\prescript{(2)}{}{w}_{ij}  \prescript{(1)}{\mathrm{out}}{P}_i =
    \prescript{(2)}{\mathrm{in}}{P}_j,
\end{equation*}
where $\prescript{(1)}{}{y}_i \approx \prescript{(1)}{\mathrm{out}}{P}_i$. This process can then be iterated over all layers until $l=L-1$, and, in the final layer $l=L$, the algorithm stops at $\prescript{(L)}{\mathrm{in}}{P}_j \approx \prescript{(L)}{}{u}_j$ if the MLP output is linear, or stops at $\prescript{(L)}{\mathrm{out}}{P}_j \approx \prescript{(L)}{}{g}\left(\prescript{(L)}{\mathrm{in}}{P}_j\right)$ if the output layer contains a non-linearity. It is worth noting that, in classification tasks, the final layer often includes a non-linear activation function, such as the sigmoid. An alternative approach involves training a NN with a linear output, which can then be transformed into a probability using a function like the softmax to make class predictions. In the latter scenario, a polynomial can be obtained to represent the linear output before the softmax operation. This polynomial coefficients can be interpreted in a similar manner to the classical interpretation of the coefficients in a logistic regression model (See Section \ref{section_covertype}). The final description of the theoretical version of NN2Poly is presented in Algorithm \ref{alg_NN2Poly}. The set of constraints required to achieve an efficient and scalable implementation while ensuring an accurate approximation of the NN are discussed in Section \ref{section_practical_implementation}, where Algorithm \ref{alg_NN2Poly_practical} is presented.

\begin{algorithm}[t]
\small
\caption{NN2Poly}\label{alg_NN2Poly}
\begin{algorithmic}[1]
\Require Weight matrices $\prescript{(l)}{}{W}$, activation functions $\prescript{(l)}{}{g}$,  Taylor truncation order at each layer $q_l$.
\State Set $\prescript{(1)}{\mathrm{in}}{P}_j = \prescript{(1)}{}{u}_j$ for all $j=1, \dots, h_1$.
\For{$l=1, \cdots, L-1$}
    \For{$j=1, \cdots, h_l$} 
        \State \textbf{Compute} $\prescript{(l)}{\mathrm{out}}{P}_j\approx \ g\left(\prescript{(l)}{\mathrm{in}}{P}_j\right)$ using Lemma \ref{lemma_taylor_poly}.
    \EndFor
    \For{$j=1, \cdots, h_{l+1}$} 
        \State \textbf{Compute} $\prescript{(l+1)}{\mathrm{in}}{P}_j = \sum_{i=0}^{h_{l}}\prescript{(l+1)}{}{w}_{ij}\prescript{(l)}{\mathrm{out}}{P}_i$
    \EndFor
\EndFor
\If{Non-linear output at last layer}
    \For{$j=1, \cdots, h_{L+1}$} 
        \State \textbf{Compute} $\prescript{(L)}{\mathrm{out}}{P}_j\approx  g\left(\prescript{(L)}{\mathrm{in}}{P}_j\right)$ using Lemma \ref{lemma_taylor_poly}.
    \EndFor
\EndIf
\end{algorithmic}
\end{algorithm}

An important remark has to be made about the obtained polynomial orders. For $l=1,\dots,L$, the truncation order of the Taylor expansion performed at layer $l$ to obtain $\prescript{(l)}{\mathrm{out}}{P}_j$ using Lemma \ref{lemma_taylor_poly} is denoted by $q_l$.

Then the order of $\prescript{(l)}{\mathrm{out}}{P}_j$ is:
\begin{equation}
    Q_{l} = \prod_{i=1}^{l} q_i,
    \label{eq_polynomial_order}
\end{equation}
and the order of $\prescript{(l)}{\mathrm{in}}{P}_j$ is $Q_{l-1}$, 
as the Taylor expansion has not yet been applied to the input.

The final expressions of the input and output coefficients of variables $\Vec{t}$ at layer $l$ and neuron $j$, are:
\begin{align}
    \prescript{(l)}{\mathrm{in}}{\beta}_{\vec{t}} &= 
    \sum_{i=0}^{h_{l-1}}\prescript{(l)}{}{w}_{ij}   \prescript{(l-1)}{\mathrm{out}}{\beta}_{\vec{t},i} \\
    \prescript{(l)}{\mathrm{out}}{\beta}_{\vec{t}} &= 
    \sum_{n=0}^{q_l}\dfrac{\prescript{(l)}{}{g}^{(n)}(0)}{n!}
    \sum_{ \Vec{n} \in \pi(\Vec{t},Q_{l-1},n)}
    \left(\begin{array}{c} n \\ \Vec{n} \end{array}\right)\prod_{k=1}^{N_{p,Q_{l-1}}} \prescript{}{\mathrm{in}}{\beta}_{k}^{n_{k}}.
\end{align}

\section{Practical implementation}
\label{section_practical_implementation}

In this section, some practical considerations about the implementation of NN2Poly are presented and discussed. First, given that it is common practice to scale the input of a NN to the $[-1, 1]$ range, our method uses a Taylor expansion around $0$. The validity of this approximation depends on the Taylor order as well as the activation function. Therefore, some constraints need to be imposed on the NN weights in order to limit the growth of the synaptic potentials to the range in which this Taylor expansion is well-behaved. Second, the polynomial order grows too fast when using all possible coefficients obtained from successive layers. In practice, it is often observed that the higher-order terms in the polynomial model are negligible, leading to a situation where imposing a maximum order does not result in any significant loss of accuracy. Finally, a simplified approach is presented to avoid computing all the multiset partitions. A reference NN2Poly implementation is available, as an R package, at \url{https://github.com/IBiDat/nn2poly}, and all the results are available at \url{https://github.com/IBiDat/nn2poly.paper}.

\subsection{Validity of Taylor approximations} \label{section_constraints}
The practical application of NN2Poly relies heavily on the synaptic potentials at each neuron being close enough to $0$, where the performed Taylor expansion is centered and therefore remains accurate. In this context, there are several regularization techniques that are usually employed when training NNs which can be useful to overcome this problem. In particular, restricting the norm of the hidden layer weight vectors while leaving the last layer weights unrestricted is used in \cite{emschwillerNeuralNetworksPolynomial2020}. This approach to ensure that the Taylor approximation is in the desired range was already explored in \cite{moralaMathematicalFrameworkInform2021} in the case of a single hidden layer NN. In this approach, it is proposed that the NN is trained  satisfying that the vector weights in the single hidden layer ($l=1$) are constrained to have an $\ell_1$-norm equal to 1:
\begin{equation}
    \left\lvert \left\lvert \prescript{(1)}{}{\vec{w}}_{j} \right\rvert \right\rvert_1 = \sum_{i=0}^p \left\lvert \prescript{(1)}{}{w}_{ij} \right\rvert = 1
\end{equation}
for all $j =1,\dots,h_1$. This restriction is especially useful in the algorithm presented in \cite{moralaMathematicalFrameworkInform2021}, because,  combining it with the scaling of the input data to the $[-1,1]$ interval, it allows us to also constrain the activation potentials to be $ \left\lvert \prescript{(1)}{}{u}_{j} \right\rvert \leq 1$, as it is shown as follows:
\begin{multline*}
    \left\lvert \prescript{(1)}{}{u_{j}} \right\rvert = \left\lvert \sum_{i=0}^p \prescript{(1)}{}{w}_{ij} x_{i} \right\rvert \leq  \sum_{i=0}^p \left\lvert \prescript{(1)}{}{w}_{ij} x_{i} \right\rvert  \\
    \leq \sum_{i=0}^p \left\lvert \prescript{(1)}{}{w}_{ij} \right\rvert = \left\lvert \left\lvert \prescript{(1)}{}{w}_{ij} \right\rvert \right\rvert_1 = 1,
\end{multline*}
where $\left\lvert x_{i} \right\rvert \leq 1$ due to the $[-1,1]$ input data scaling.

In order to extend this condition to every hidden layer $l$, so it holds that $\left\lvert \prescript{(l)}{}{u_{j}}\right\rvert \leq 1$, an extra consideration has to be made. The input $\prescript{(l-1)}{}{y_{i}}$ to layer $l$ may no longer satisfy $\left\lvert \prescript{(l-1)}{}{y_{i}} \right\rvert \leq 1$ for $l \geq 2$. To ensure this condition, the activation function $\prescript{(l-1)}{}{g}$ can be chosen such that its output remains in $[-1,1]$, like the hyperbolic tangent $\tanh()$. This holds because $\prescript{(l-1)}{}{y_{i}} = \prescript{(l-1)}{}{g}\left(\prescript{(l-1)}{}{u_{i}}\right)$, so a $\prescript{(l-1)}{}{g}()$ that satisfies this will ensure that $\left\lvert \prescript{(l-1)}{}{y_{i}} \right\rvert \leq 1$. Furthermore, a less strict condition can be used: assuming $ \left\lvert \prescript{(l-1)}{}{u}_{i} \right\rvert \leq 1$, any activation function $\prescript{(l-1)}{}{g}$ that maps $[-1,1]$ into itself will produce the same result. Therefore, a NN trained with activation functions that satisfy this and with a constraint that forces the weights to have an $\ell_1$-norm equal to 1 will ensure that the synaptic potentials $\prescript{(l)}{}{u}_j$ will be constrained, avoiding errors in the Taylor expansion. Note also that the weights in the output layer do not need to be restricted when solving a regression problem, as no Taylor expansion is performed in the output layer.

Other regularization techniques, such as the  $\ell_2$-norm, or different bounds might be employed to solve the problem of controlling the Taylor expansion and the synaptic potentials. However, it should be noted that there is a trade-off between the explainability using NN2Poly with these constraints and the speed and effectiveness of the NN training. 

\subsection{Polynomial order and number of interactions}
\label{section_avoid_high_order}

With the general procedure developed in Section \ref{section_nn2poly_theoretical}, the polynomial order grows, as the product of the Taylor truncation order used at each layer (see Eq. \ref{eq_polynomial_order}), which generates a highly increasing number of interactions. Fig. \ref{fig_n_terms_polynomial} shows the growth of the number of polynomial terms with respect to the number of variables $p$ and the polynomial order $Q$. 
\begin{figure}[t]
	\centering
	\includegraphics[width = \linewidth]{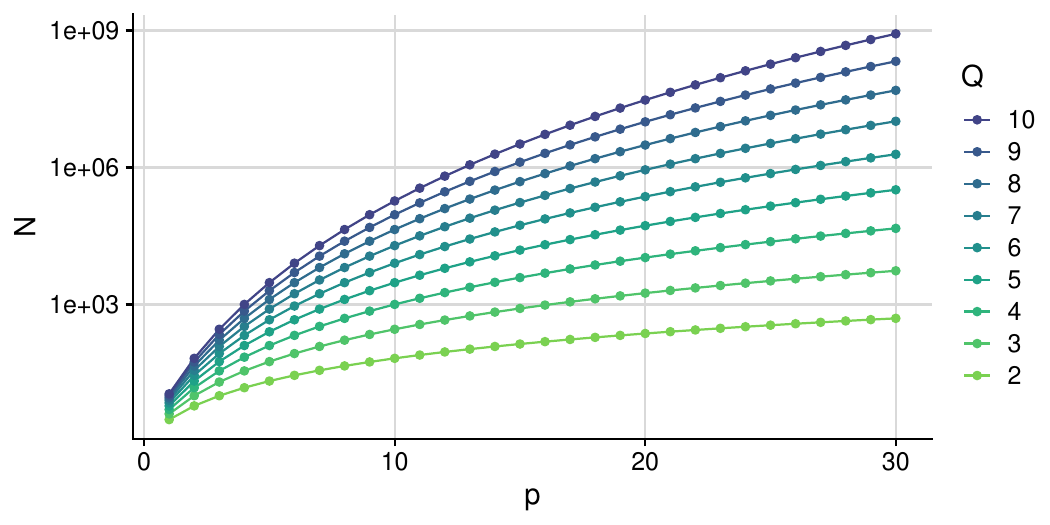}
	\caption{Number of terms, in logarithmic scale, in a polynomial, varying number of variables $p$ and order of the polynomial $Q$, with all interactions being considered.}
	\label{fig_n_terms_polynomial}
\end{figure}
This raises two issues. First, in terms of computational complexity, as the problem quickly becomes infeasible for deep learning architectures. Second, the interpretability obtained from those coefficients becomes less useful when there are too many of them. 

However, the relationships that the NN might learn from the data may not be so complex for many problems. In practice, the obtained higher order coefficients tend to be close to zero and not relevant in the polynomial prediction (see Section \ref{section_simulation_study}). This can be exploited to impose an overall truncation order $Q_{\mathrm{max}}$ at every layer, which hugely simplifies the computations. As an example, let us consider a case where the input data has quadratic relations and the NN trained with it has 4 hidden layers. Using NN2Poly with Taylor truncation order $q_i=3$, at all the hidden layers, would yield a final polynomial of order $Q=3^4=81$, which is obviously excessive for quadratic data. Therefore, it is expected that the weights learned by the NN will be such that the final function represents those quadratic relations, and thus the higher order coefficients will be close to zero. Then, setting $Q_{\mathrm{max}=2}$ or $Q_{\mathrm{max}=3}$ will suffice, avoiding the computation of higher order terms. Then, the new polynomial order obtained at each layer $l$ will be
\begin{equation}
    Q_{l}^{*} = \min\left(\prod_{i=1}^{l} q_i,Q_{\mathrm{max}}\right).
    \label{eq_forced_max_Q}
\end{equation}
This new condition is included in Algorithm \ref{alg_NN2Poly_practical} to make a more practical and efficient version of Algorithm \ref{alg_NN2Poly}.

\subsection{Multiset partitions generation and selection}\label{section_multiset_computation}

One of the main computational efforts of the presented method is the computation of all the allowed partitions of a given multiset $M$ associated to $\Vec{t}$ to obtain $\pi(\Vec{t},Q,n)$ from Lemma \ref{lemma_taylor_poly}. In general, finding all the partitions of a multiset can be done with Algorithm \ref{alg_knuth}, presented in Appendix \ref{appendix_multiset} as developed in \cite{knuthArtComputerProgramming2005}. However, as these partitions have to be computed for each term in the polynomial, this can become quite computationally expensive when the polynomial grows both in size $p$ and order $Q$, as was shown in Fig. \ref{fig_n_terms_polynomial}. Furthermore, the  partitions of higher order terms become significantly harder to compute. The top panels in Fig. \ref{fig_partitions_time_and_size} show the computational time and memory size required to compute and store the multiset partitions, associated to each term in a polynomial, and with several values of order $Q$ and size $p$.
\begin{figure}[t]
	\centering
	\includegraphics[width = \linewidth]{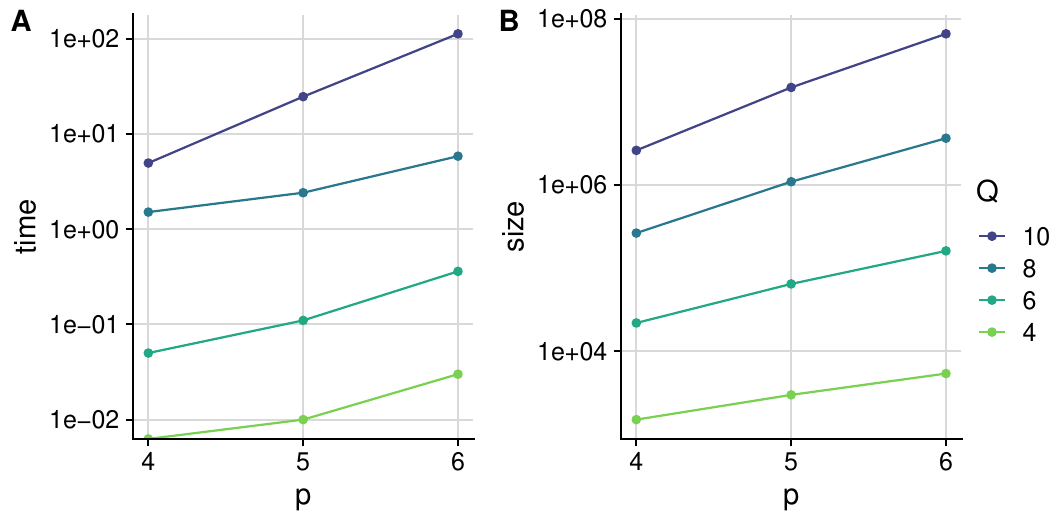}
	\includegraphics[width = \linewidth]{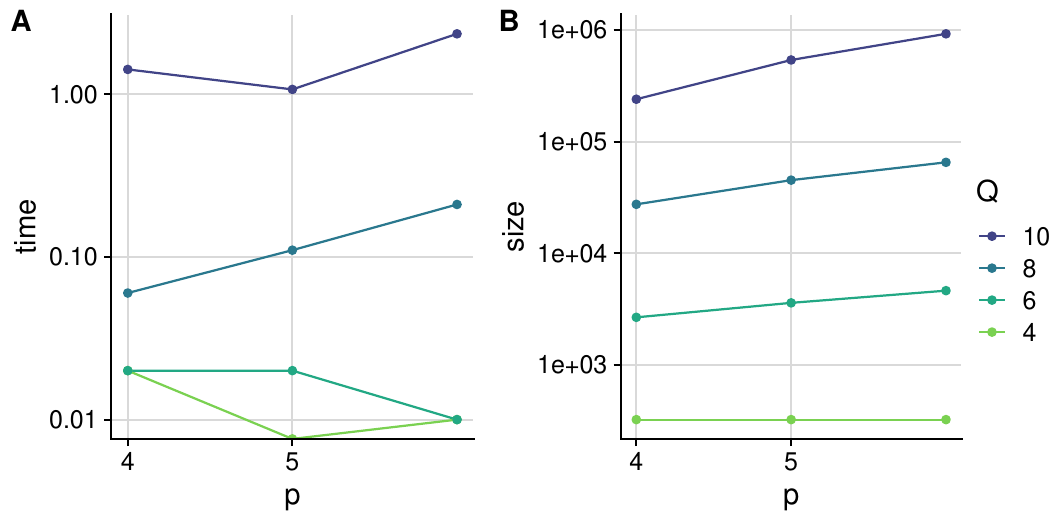}
	\caption{Computational time (A) and size (B), in logarithmic scale, required to compute all the possible multiset partitions 1) for all terms in a polynomial of order $Q$ with $p$ variables (top panels), and 2) only for the representative terms $M_0$ (defined in Section \ref{section_multiset_computation}) in a polynomial of order $Q$ with $p$ variables (bottom panels).}
	\label{fig_partitions_time_and_size}
\end{figure}

A useful idea to reduce this computational burden is to identify partitions that are equivalent, which happens when two multisets 
have the same number of unique elements and same multiplicities, i.e., there exists a bijection between the elements of both multisets. Consider the multiset $M_1=\{1,1,2,3\}$, for which its partitions have been computed, and another multiset $M_2=\{2,3,3,5\}$. Then, performing the changes $3\rightarrow 1$, $2\rightarrow 2$ and $5\rightarrow 3$, $M_2$ can be transformed into $M_1$, and all the $M_1$ partitions can be transformed into the $M_2$ partitions. Using this idea, the number of multisets for which we need to explicitly compute their partitions can be reduced significantly.

To determine a consistent criterion to choose the multiset that will be used to compute the partitions of all equivalent multisets, consider an arbitrary multiset $M$ given by the combination of variables $\Vec{t}=(t_1,t_2,\dots,t_p)$ per Eq. \ref{eq_multiset_M_from_t}. Then, consider $p_0$ the number of components $t_i > 0$ for all $i=1,\dots,p$, and order those components greater than zero in descending order in a new vector  $\Vec{t}_{0}=(t_{(1)},t_{(2)},\dots,t_{(p_0)})$, where $t_{(1)} \geq t_{(2)} \geq \dots \geq t_{(p_0)}$ and each $t_{(j)}$ takes the value of some $t_i>0$ starting from $t_{(1)}=\max_{i=1}^p (t_i)$. Then, the multiset $M_0$ given by $\Vec{t}_{0}$, defined by
\begin{equation}
    M_0 = \left\{\underbrace{1,\dots,1}_{t_{(1)}}, \underbrace{2,\dots,2}_{t_{(2)}}, \dots, \underbrace{p,\dots,p}_{t_{(p_0)}}\right\}
\end{equation}
is equivalent to $M$ given by $\Vec{t}$. To provide a clarifying example, consider $M=\{2,3,3,5\}$, which is equivalent to $\Vec{t}=(0,1,2,0,1)$ if $p=5$. Then, the equivalent multiset is given by $\Vec{t}_0=(2,1,1)$ and $M_0=\{1,1,2,3\}$, where $t_{(1)}=2=t_3$, so $3$ is changed to $1$, $t_{(2)}=1=t_2$, so $2$ is changed to $2$ and $t_{(3)}=1=t_5$, so $5$ is changed to $3$. Note that this would yield the multiset $\{2,1,1,3\}$, but, as order does not affect multisets, it is the same multiset as $M_0=\{1,1,2,3\}$. Computing only the partitions of the multisets $M_0$ that represent each possible class of equivalence between multisets significantly decreases the computation times and storage sizes as can be seen in the bottom panels in Fig.~\ref{fig_partitions_time_and_size}. This efficiency improvement is also included in Algorithm~\ref{alg_NN2Poly_practical}.

\begin{algorithm}[t]
\small
\caption{NN2Poly: Practical implementation}\label{alg_NN2Poly_practical}
\begin{algorithmic}[1]
\Require Weight matrices $\prescript{(l)}{}{W}$, activation functions $\prescript{(l)}{}{g}$,  Taylor truncation order at each layer $q_l$, maximum order $Q_{\mathrm{max}}$.
\State Compute all multiset partitions for every equivalent $M_0$ needed for a polynomial of order $Q_{\mathrm{max}}$ in $p$ variables.

\State Set $\prescript{(1)}{\mathrm{in}}{P}_j = \prescript{(1)}{}{u}_j$ for all $j=1, \dots, h_1$.
\For{$l=1, \cdots, L-1$}
    \For{$j=1, \cdots, h_l$} 
        \State \textbf{Compute} the coefficients, only up to order $Q_{l}^{*}$, from  $\prescript{(l)}{\mathrm{out}}{P}_j\approx \ g\left(\prescript{(l)}{\mathrm{in}}{P}_j\right)$ using Lemma \ref{lemma_taylor_poly}.
    \EndFor
    \For{$j=1, \cdots, h_{l+1}$} 
        \State \textbf{Compute} the coefficients, only up to order $Q_{l}^{*}$, from $\prescript{(l+1)}{\mathrm{in}}{P}_j = \sum_{i=0}^{h_{l}}\prescript{(l+1)}{}{w}_{ij}\prescript{(l)}{\mathrm{out}}{P}_i$
    \EndFor
\EndFor
\If{Non-linear output at last layer}
    \For{$j=1, \cdots, h_{L+1}$} 
        \State \textbf{Compute} the coefficients, only up to order $Q_{l}^{*}$, from  $\prescript{(L)}{\mathrm{out}}{P}_j\approx  g\left(\prescript{(L)}{\mathrm{in}}{P}_j\right)$ using Lemma \ref{lemma_taylor_poly}.
    \EndFor
\EndIf
\end{algorithmic}
\end{algorithm}

Finally, Lemma \ref{lemma_taylor_poly} defines two conditions that would discard some of the computed partitions. These conditions are currently checked after generating all the partitions using Algorithm~\ref{alg_knuth}, but in the future could be included in the partition generation algorithm to avoid unnecessary computations. However, when including a forced maximum order for the polynomial, $Q_{\mathrm{max}}$ from Eq.~\ref{eq_forced_max_Q}, the number of discarded partitions becomes really low. As soon as $Q_{\mathrm{max}}$ is achieved at some layer, Condition 2 is no longer excluding any partition; and as long as $q_l$ is set as greater than $Q_{\mathrm{max}}$, the same happens with Condition 1.

Algorithm \ref{alg_NN2Poly_practical} implements NN2Poly including the efficiency improvements from Section \ref{section_avoid_high_order} and Section \ref{section_multiset_computation}.

\section{Simulation study and discussion}\label{section_simulation_study}

\begin{figure}[t]
	\centering
	\includegraphics[width = \linewidth]{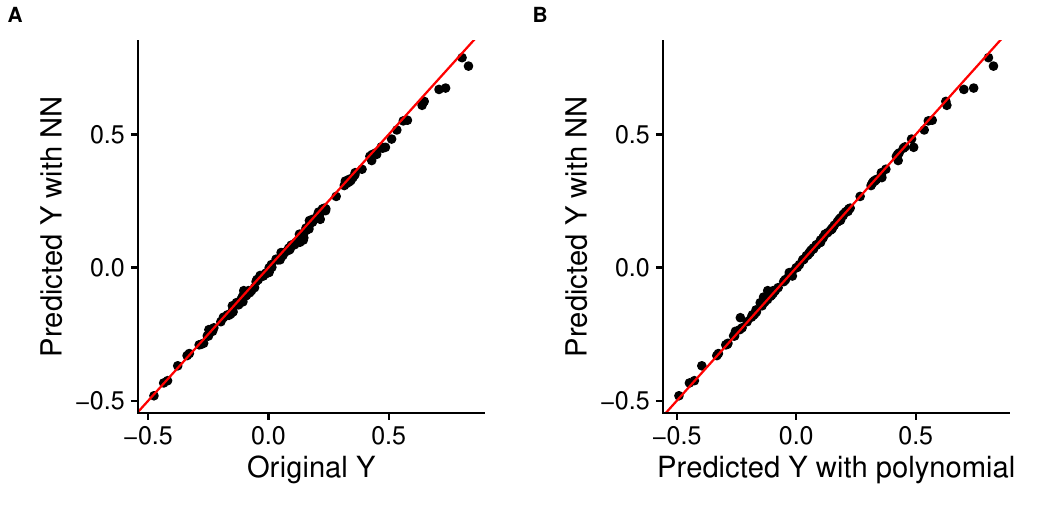}
	\includegraphics[width = \linewidth]{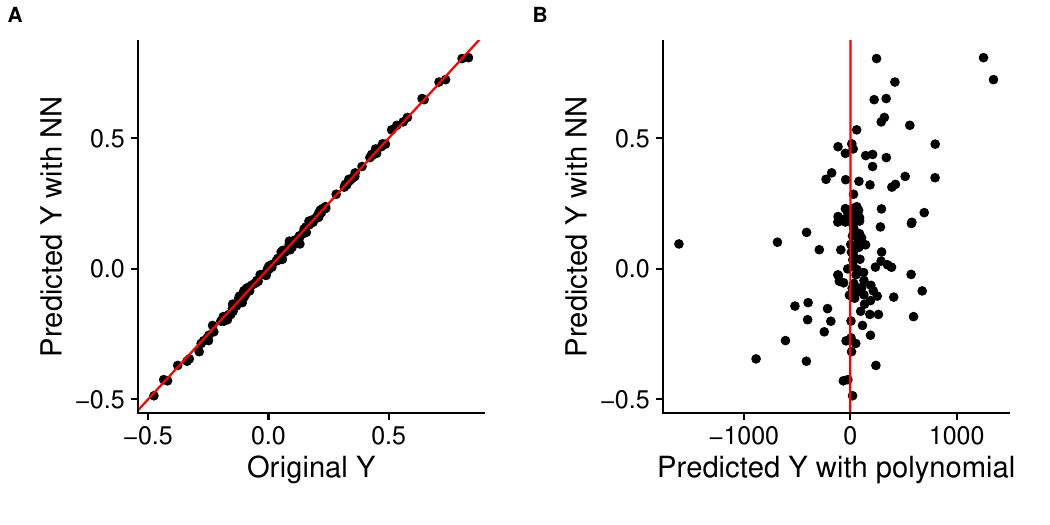}
	\caption{Performance example of NN2Poly used on two NNs to obtain a polynomial with the same predictions. The NNs are trained on polynomial data and built with 3 hidden layers, 100 neurons per hidden layer and hyperbolic tangent activation functions, imposing weight constraints (top panels) and no constraints (bottom panels). (A) represent the NN predictions vs. the original response in the data. (B) represent the NN predictions vs. the obtained polynomial predictions. }
	\label{fig_performance}
\end{figure}

In this section, some simulation results will be presented, showcasing the effectiveness of NN2Poly, especially on those networks that satisfy the $\ell_1$-norm equal to 1 constraint, presented in Section \ref{section_practical_implementation}. Notably, if any MLP is randomly initialized under such constraints, the resulting polynomial inferred by NN2Poly perfectly matches the NN predictions. Therefore, the challenge here is to demonstrate that the NN can successfully learn patterns from data and that the proposed method still can be used to understand them. To this end, the final section shows two real use cases with tabular data based on the Boston Housing data set (regression problem) \cite{harrisonHedonicHousingPrices1978} and the Forest Covertype data set (classification problem) \cite{blackardComparativeAccuraciesArtificial1999}. Tabular data has been identified still as an important challenge for neural networks \cite{borisovDeepNeuralNetworks2022}, and NN2Poly can be a useful tool in this line of research. The chosen real data sets (Boston Housing and Covertype) have been chosen following the recommendations for tabular data benchmarks and neural networks given in \cite{borisovDeepNeuralNetworks2022}. Other types of data, such as images (where each pixel would represent a variable), require future work when adapting the computational implementation, because, as explained in section \ref{section_avoid_high_order}, the number of variables considered is critical when scaling the method.

\subsection{Performance example}\label{section_performance}

\begin{figure*}[t]
	\centering
	\includegraphics[width = \linewidth]{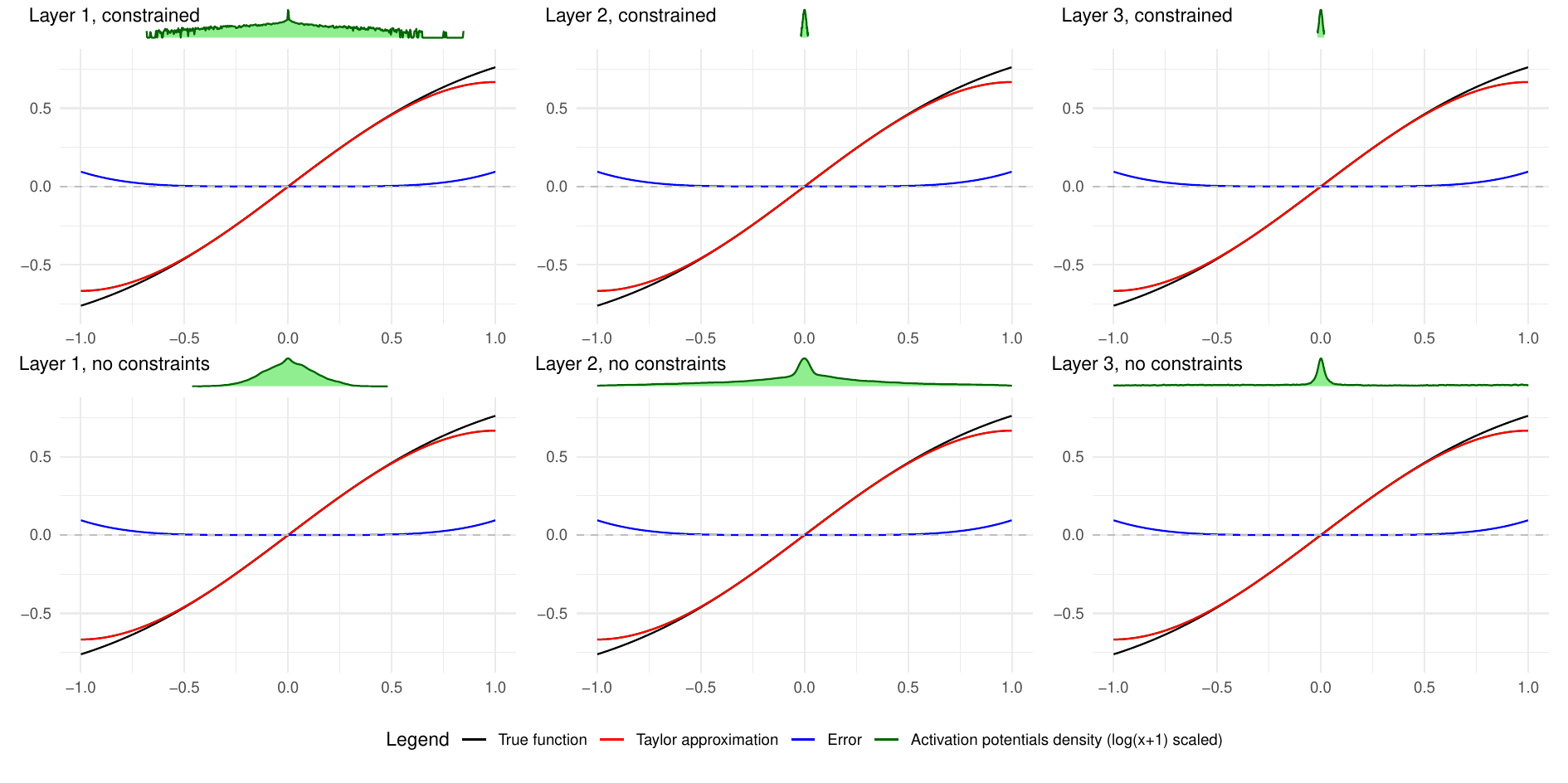}
	\caption{Taylor expansion analysis by layers for the two NNs from Fig. \ref{fig_performance}, one with weight constraints (top) and the other with no constraints (bottom). The blue line represents the absolute error between the Taylor approximation (red) and the actual activation function (black). The green line on top of each plot represents the density distribution of the synaptic potentials obtained from the test data (transformed using $log(x+1)$ to better visualize the results). The constrained NN (top) does a better job of keeping the potentials inside the acceptable region.}
	\label{fig_taylor_layers}
\end{figure*}

\begin{figure}[t]
	\centering
	\includegraphics[width = \linewidth]{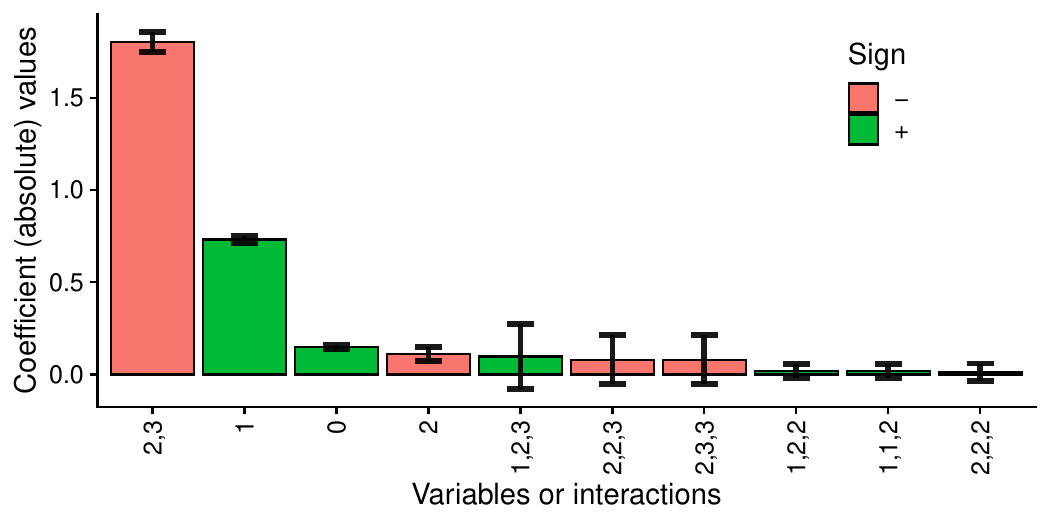}
	\caption{Average polynomial coefficients obtained after applying NN2poly on 10 NNs trained with the same structure as in Fig \ref{fig_performance}. The 10 highest average coefficients, scaled and in absolute value are shown. The two principal coefficients are "2,3" and "1", corresponding to the terms in the original polynomial that generated the data. Variables or interactions are represented with numbers in the horizontal axis, with red being negative coefficients and green positive ones.}
	\label{fig_coeffs_example1}
\end{figure}

Before proceeding with the complete simulation study, two performance examples are presented in detail to motivate and understand the results that can be expected from NN2Poly in different situations.
For illustration purposes, a small polynomial of order 2, in 3 variables, is used to generate data: $4X_1 - 3 X_2X_3$. Each variable $X_i$ is generated from a $\mathcal{N}(0,1)$. The response $Y$ is computed using the polynomial an adding a small error from a $\mathcal{N}(0,0.1)$. The process is repeated 500 times. The data set is randomly divided in a $75-25\%$ split for training and test, and subsequently scaled to the $[-1,1]$ interval.

With this data, two NN have been trained, with the same hyperparameters, differing only in the fact that one of the NN has the weights constrained to have $\ell_1$-norm equal to 1, as explained in Section \ref{section_constraints}. Both NNs have 3 hidden layers with 100 hidden neurons per layer, include the hyperbolic tangent as activation function, and they are trained with the ADAM optimizer \cite{kingmaAdamMethodStochastic2017}. NN2poly is then employed on both NNs, using $Q_{\mathrm{max}}=3$.

Fig. \ref{fig_performance} shows the performance for the NN trained with $\ell_1$-norm weight constraints (top) and the NN trained with no weight constraints (bottom). The left panels (A) compare the NN predictions to the original response $Y$, and the right panels (B) compare the NN predictions to the polynomial predictions obtained using NN2Poly. Note here that the NN performance is not directly relevant to the NN2Poly performance, as the method will represent the NN behavior without taking into consideration how well the NN solves the given problem. Regarding this, it should be noted how the polynomial predictions closely match the NN in the constrained case (top right) opposed to the unconstrained situation (bottom right). However, the NN learns more slowly when trained under these constraints. This is a clear example of the existing trade-off discussed in Section~\ref{section_constraints}.

Fig. \ref{fig_taylor_layers} represents the behavior of the Taylor approximation at each layer in the two previous NNs (constrained and unconstrained). The black line represents the actual activation function at that layer, the red line, its Taylor approximation, and the blue line, the error computed as the absolute value between them. Then, the green line represents the density distribution of the synaptic potentials, obtained from the test data. The narrower this density is centered around zero, the closer the Taylor expansion is to the real activation function in most of the data points. In these examples, both have their synaptic potentials centered around zero, but the restricted case quickly reduces its range in the deeper layers while the unrestricted case has synaptic potentials that are wider and increase their range, which is a behavior that impairs NN2Poly's representation accuracy.

Fig. \ref{fig_coeffs_example1} corresponds to training ten NNs on the same data and same conditions as before, in the constrained case. The polynomial coefficients are averaged over the ten realizations and the obtained means are represented in absolute value in descending order and including their standard deviations as error bars. Only the 10 highest coefficients are represented. It is clear that the obtained polynomial is an accurate representation of the original data, as the two most important coefficients are "2,3" (the interaction between variables 2 and 3) and "1" (variable 1 alone), corresponding to the two terms in the original polynomial that generated the data. The rest of the coefficients are small,  and appear due to the error term added when generating the data and because of the approximation nature of the polynomial. Note that the coefficients are scaled because training data falls within the $[-1,1]$ interval, while the original polynomial coefficients were defined before scaling. Therefore, they retain equivalent magnitudes but not the exact same values.

\subsection{Simulation study}

\begin{figure*}[t]
	\centering
	\includegraphics[width = \linewidth]{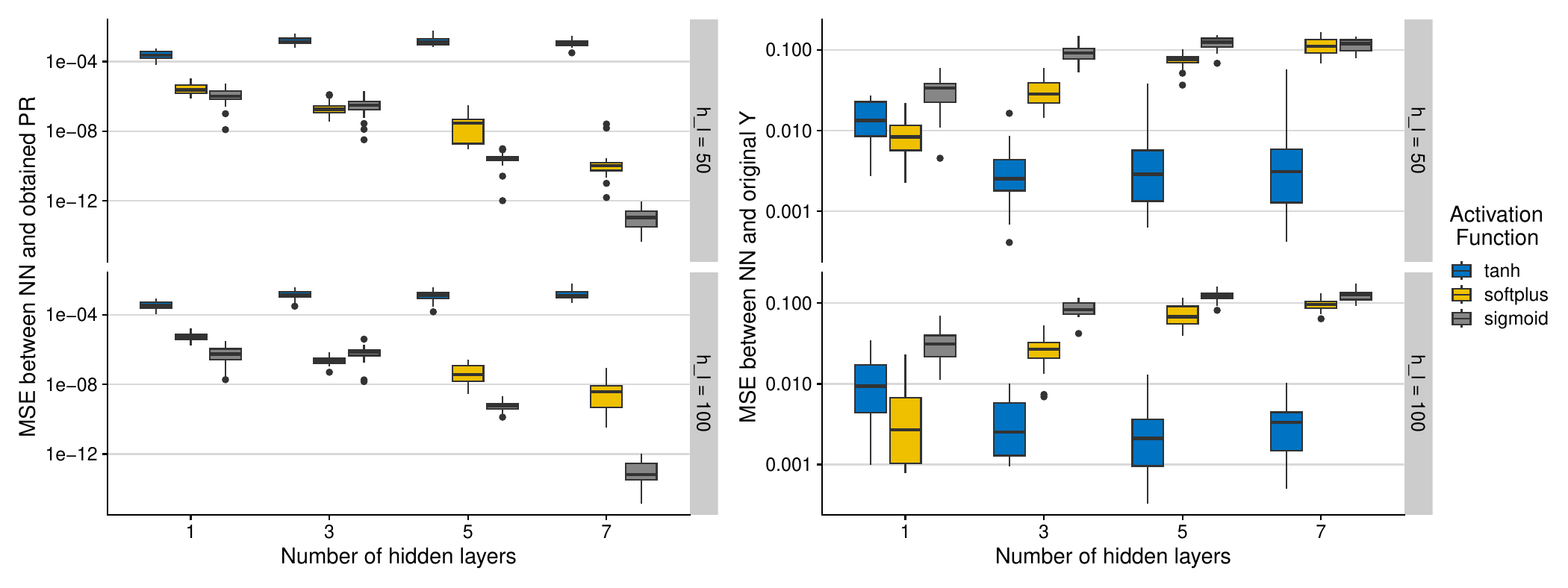}
	\caption{MSE between NN predictions and 1) the obtained polynomial predictions with NN2Poly (left), and 2) the original response $Y$ (right) for several NN configurations: 1, 3, 5 and 7 layers, 50 and 100 units per hidden layer, and hyperbolic tangent (blue), softplus (yellow) and sigmoid (grey) as activation functions.}
	\label{fig_MSE}
\end{figure*}

In order to address the effectiveness of NN2Poly over several examples, the method is tested using different MLP configurations in a regression setting, trained over simulated data. In these examples, the data is generated from polynomials of order 2 in 5 variables ($p=5$), limiting the number of possible interactions of order 2 to be 5 out of all possible, chosen at random. The coefficients of the selected terms in the polynomial are taken at random from $\{-2,1,2,1\}$. The predictor variables $X_i$ are generated from a uniform distribution in $[-10,10]$, taking 500 samples. Finally, the output $Y$ is computed using the selected polynomial coefficients and predictor variables and adding an error $\epsilon \sim \mathcal{N}(0,1)$. Then, the data set is randomly divided in a $75-25\%$ split for training and test, and subsequently scaled to the $[-1,1]$ interval. The NNs are created with 1, 3, 5 and 7 hidden layers, with 50 and 100 neurons per hidden layer, and different activation functions are tested (hyperbolic tangent, softplus and sigmoid) with the $\ell_1$-norm weight constraint.

The NN2Poly method is applied after training, using Taylor expansion of order $8$ at each layer and $Q_{\mathrm{max}}=3$, obtaining the coefficients of a polynomial representation. This process is repeated 20 times for each NN configuration, and each time, the Maximum Squared Error (MSE) between the obtained polynomial predictions and the NN predictions is computed and represented in Fig. \ref{fig_MSE} (left). The MSE between the NN and the original response is also represented in Fig. \ref{fig_MSE} (right). It can be observed that the representation error achieved with NN2Poly is quite low, then slightly higher and increasing with the number of layers in the case of the hyperbolic tangent. On the other hand, the softplus and sigmoid examples seem to get lower errors and improve with the number of layers. However, it should be noted that NNs using both the softplus and the sigmoid seem to have problems when actually learning the original response $Y$, as can be seen in Fig. \ref{fig_MSE} (right), where the NN performance does only achieve satisfactory results in the case of the hyperbolic tangent. 

In terms of the variability, Fig. \ref{fig_MSE} (left) shows that the NN2Poly MSE has higher variability when the number of layers is increased for the softplus and sigmoid cases, while the hyperbolic tangent does not show this behavior, even reducing its variability with more layers. This difference can be due to the bounds and the symmetry around 0 of the hyperbolic tangent's output, while the sigmoid and softplus response characteristics may affect subsequent layers when using a Taylor expansion cantered around zero.

\subsection{Real application: Boston Housing data set (Regression)}\label{section_boston}
In this section, a real use case of NN2Poly is showcased using the Boston Housing data set \cite{harrisonHedonicHousingPrices1978}. It presents a regression problem aimed at predicting the median value of homes in a Boston suburb in the 1970s. The data set contains 506 rows, split in 404 and 102, train and test respectively. See Table~\ref{tab:boston} for further details.

With this data, 10 NNs are trained using the same methodology described in Section \ref{section_performance}, with 3 hidden layers, 100 neurons per layer and including the hyperbolic tangent as activation function. The weights are updated using ADAM \cite{kingmaAdamMethodStochastic2017}, imposing the $\ell_1$-norm weight constraint. Then, NN2Poly is applied with $Q_{\mathrm{max}}=3$ to obtain a polynomial representation. The NN's performance for one of these realizations is depicted in Fig.~\ref{fig_Boston_performance}A (all of them achieve similar results). On the other hand, Fig.~\ref{fig_Boston_performance}B shows that the polynomial representation obtained by NN2Poly closely matches the NN predictions.

\begin{figure}[t]
	\centering
	\includegraphics[width = \linewidth]{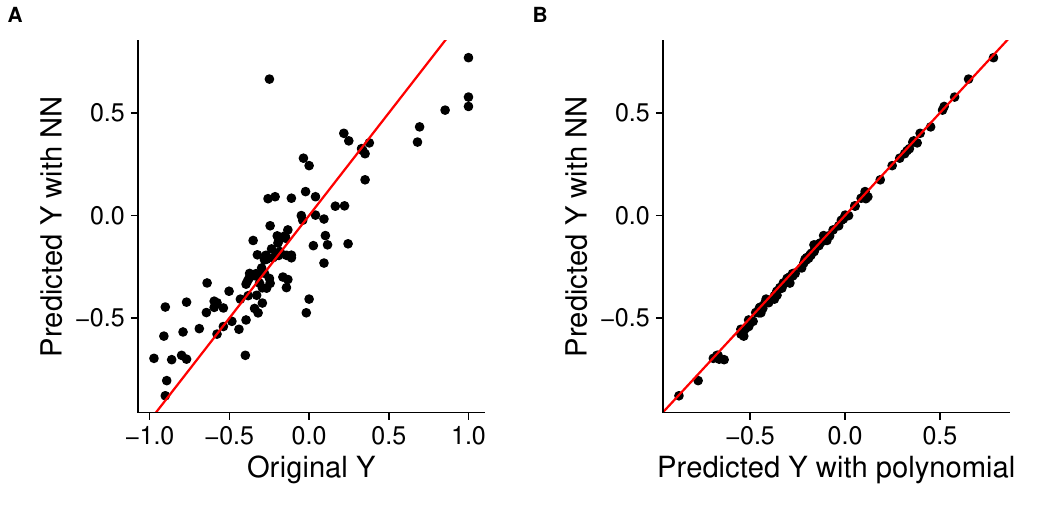}
	\caption{Example of the predictions obtained by one of the NNs trained with the Boston data set. (A) represents the NN predictions against the original response. (B) represents the NN predictions versus the obtained polynomial predictions using NN2Poly.}
	\label{fig_Boston_performance}
\end{figure}

The mean and the standard deviation of the coefficients of this polynomial are obtained over the 10 realizations. The most important 20 coefficients are shown in Fig.~\ref{fig_Boston_barplot}, where 0 represents the intercept, 1 represents the first variable, and so on (see Table~\ref{tab:boston} for their meanings). This allows us to observe which variables (or interactions) are more important, interpret their meaning, and analyze their corresponding variation with different trainings of the NN. Note that the obtained results are consistent with the expected behavior both in importance (absolute value) and direction (coefficient sign). For example, the four most important variables are 13 (percentage lower status of the population, appears in the polynomial with negative sign), 6 (the average number of rooms per dwelling, appears with positive sign), 1 (per capita crime rate, appears with negative sign) and 8 (weighted distances to five Boston employment centers, appears with negative sign). On the other hand, variables like 7 (the proportion of owner-occupied units built before 1940) do not appear in the selected coefficients, meaning that it affects the output less, which is coherent as this variable does not seem as relevant as the previous ones.

\begin{figure}[t]
	\centering
	\includegraphics[width = \linewidth]{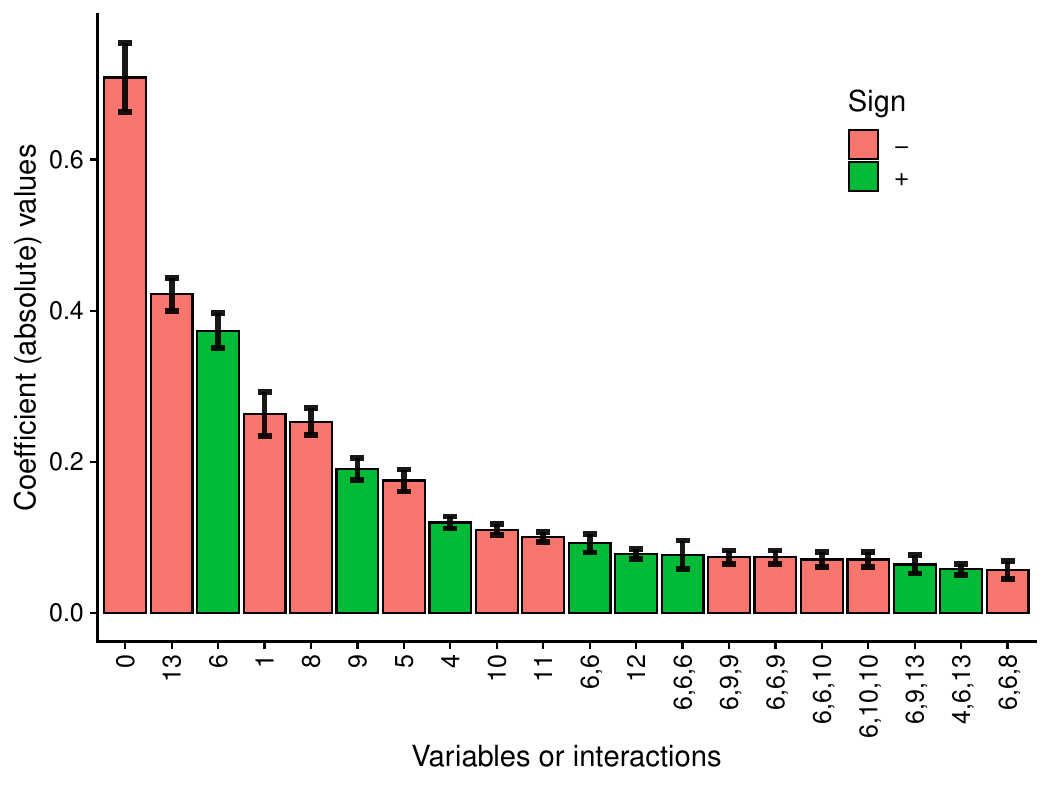}
	\caption{Average polynomial coefficients obtained after applying NN2Poly on the 10 NNs trained with the Boston data set. The 20 highest average coefficients, scaled and in absolute value are shown. Variables or interactions are represented with numbers in the horizontal axis, with red being negative coefficients and green positive ones.}
	\label{fig_Boston_barplot}
\end{figure}

\subsection{Real application: Covertype data set (Classification)}\label{section_covertype}

In this section, a different real data set is utilized to train a NN, and subsequently, NN2Poly is employed to derive the polynomial representation in a classification setting, showcasing how distinct polynomials are obtained for each class. The Covertype data set \cite{blackardComparativeAccuraciesArtificial1999} contains cartographic data  from four wilderness areas in the Roosevelt National Forest, Colorado, United States. The observations correspond to 30 squared meters cells, were one out of 7 forest types is assigned. Each of these observations contains 54 predictor variables, corresponding to both quantitative and qualitative variables (in binary form). The data set contains $581012$ observations, where $80\%$ is used for training and the rest for testing, and all variables have been scaled to the $[-1,1]$ interval.

With this data set, a NN is trained with 3 hidden layers and 100 neurons each, using the hyperbolic tangent as the activation function. Given that there are 7 forest types as targets, the final layer has 7 outputs. The NN is trained using a sparse categorical crossentropy loss. This final layer does not present an activation function when training, as explained in \ref{section_nn2poly_theoretical}. Then, the probabilities for each observation and class are obtained with the final trained model adding a softmax activation layer. The actual class predictions will be then obtained taking the maximum of those probabilities. This yields an accuracy of $0.7169$, which is similar to the accuracy for a simple feed forward neural network of $0.7058$ reported in \cite{blackardComparativeAccuraciesArtificial1999}. Note that this is the original NN accuracy, which is not relevant when using NN2Poly to represent it as polynomials. The confusion matrix for the test data can be seen in Table \ref{tab:confusionNN}. It is worth noting that some classes can be particularly challenging to distinguish, such as class 5. There are several factors that can contribute to the difficulty in correctly predicting this class, including the presence of a class imbalance, a poor feature representation, and the inherent complexity of the class itself.

\begin{table}[t]
\footnotesize
\centering
\caption{Confusion Matrix between NN predictions (NN, rows) and test data response (Y, columns) for each class (1 to 7)}
\begin{tabular}{cccccccc}
\hline
\multicolumn{1}{c|}{\textbf{NN/Y}} & \multicolumn{1}{c}{\textbf{1}} & \multicolumn{1}{c}{\textbf{2}} & \multicolumn{1}{c}{\textbf{3}} & \multicolumn{1}{c}{\textbf{4}} & \multicolumn{1}{c}{\textbf{5}} & \multicolumn{1}{c}{\textbf{6}} & \multicolumn{1}{c}{\textbf{7}} \\ \hline
\multicolumn{1}{c|}{\textbf{1}} & 30926 & 11290 & 0 & 0 & 20 & 0 & 2541 \\
\multicolumn{1}{c|}{\textbf{2}} & 11229 & 44340 & 817 & 0 & 1872 & 870 & 34 \\
\multicolumn{1}{c|}{\textbf{3}} & 9 & 528 & 5278 & 314 & 45 & 1565 & 0 \\
\multicolumn{1}{c|}{\textbf{4}} & 0 & 0 & 77 & 189 & 0 & 4 & 0 \\
\multicolumn{1}{c|}{\textbf{5}} & 0 & 0 & 0 & 0 & 0 & 0 & 0 \\
\multicolumn{1}{c|}{\textbf{6}} & 16 & 382 & 975 & 59 & 23 & 1096 & 0 \\
\multicolumn{1}{c|}{\textbf{7}} & 226 & 0 & 0 & 0 & 0 & 0 & 1477 \\
\hline
\end{tabular}
\label{tab:confusionNN}
\end{table}

NN2Poly is used on the trained model (linear output, no softmax activation layer included at this point), and 7 different polynomials are obtained at each output neuron. The maximum order obtained for the polynomial, $Q_{\mathrm{max}}$, was tested with values 1, 2 and 3. The straight forward application of the obtained polynomials can be measured with the MSE between the original NN predictions (linear prediction, before the softmax activation function) and the linear predictions of both polynomials. This results are in Table \ref{tab:MSE}, where it can be seen that $Q_{\mathrm{max}}=1$ and $Q_{\mathrm{max}}=2$ have similar results, with a slightly lower MSE for $Q_{\mathrm{max}}=2$. However, $Q_{\mathrm{max}}=3$ presents a significant increase in MSE. This can be explained due to the fact that increasing the Taylor order can increase the range in which the approximation is correct, but at the same time, the error grows at a faster rate outside of that range\cite{moralaMathematicalFrameworkInform2021}. The polynomial predictions are subjected to the same process as the neural network predictions. First, the predictions are converted to probabilities using the softmax function, and then, the class with the highest probability is assigned as the predicted class. The obtained accuracy is $0.99342$, for the polynomial with $Q_{\mathrm{max}}=2$, $0.99339$ for the polynomial with $Q_{\mathrm{max}}=2$, and $0.95311$ with $Q_{\mathrm{max}}=3$, due to the increase in MSE explained previously. The confusion matrix for $Q_{\mathrm{max}}=1$ can be see in Table \ref{tab:confusion1}. This result shows that NN2Poly can accurately obtain polynomial representations that predict in almost the same way as the original neural network. In this case, it also showcases that even in real scenarios with tabular data, a first or second order polynomial is enough to capture the NN's behavior. The rest of the analysis will be carried out using the $Q_{\mathrm{max}}=1$ polynomial. Note also that the number of terms in these polynomials grows as follows: 55 terms for $Q_{\mathrm{max}}=1$, 1540 terms for $Q_{\mathrm{max}}=2$ and 29260 terms for $Q_{\mathrm{max}}=3$, Therefore, if the lower order polynomial is accurate enough, it should always be preferred.

\begin{table}
\footnotesize
\caption{MSE between NN and polynomial predictions\\for each forest type (previous to softmax application)}
\centering
\begin{tabular}[t]{cccc}
\hline
\makecell{Forest \\ Type} & \makecell{MSE \\ $Q_{\mathrm{max}}=1$} & \makecell{MSE \\ $Q_{\mathrm{max}}=2$} & \makecell{MSE \\ $Q_{\mathrm{max}}=3$}\\
\hline
1 & 0.004521 & 0.004354 & 0.050123\\
2 & 0.000040 & 0.000035 & 0.000139\\
3 & 0.021703 & 0.020842 & 0.236947\\
4 & 0.042978 & 0.040572 & 0.418598\\
5 & 0.001211 & 0.001142 & 0.011137\\
6 & 0.013885 & 0.013265 & 0.145349\\
7 & 0.018609 & 0.018115 & 0.224889\\
\hline
\end{tabular}
\label{tab:MSE}
\end{table}

\begin{table}[t]
\footnotesize
\centering
\caption{Confusion Matrix between NN (rows) and polynomial (columns) predictions (using $Q_{\mathrm{max}}=1$)}
\begin{tabular}{ccccccc}
\hline
\multicolumn{1}{c|}{\textbf{NN/Poly}} & \multicolumn{1}{c}{\textbf{1}} & \multicolumn{1}{c}{\textbf{2}} & \multicolumn{1}{c}{\textbf{3}} & \multicolumn{1}{c}{\textbf{4}} & \multicolumn{1}{c}{\textbf{6}} & \multicolumn{1}{c}{\textbf{7}} \\ \hline
\multicolumn{1}{c|}{\textbf{1}} & 44479 & 0 & 0 & 0 & 0 & 298 \\
\multicolumn{1}{c|}{\textbf{2}} & 232 & 58861 & 46 & 0 & 23 & 0 \\
\multicolumn{1}{c|}{\textbf{3}} & 0 & 0 & 7685 & 54 & 0 & 0 \\
\multicolumn{1}{c|}{\textbf{4}} & 0 & 0 & 0 & 270 & 0 & 0 \\
\multicolumn{1}{c|}{\textbf{6}} & 0 & 0 & 107 & 4 & 2440 & 0 \\
\multicolumn{1}{c|}{\textbf{7}} & 0 & 0 & 0 & 0 & 0 & 1703 \\
\hline
\end{tabular}
\label{tab:confusion1}
\end{table}

\begin{figure}[t]
	\centering
	\includegraphics[width = \linewidth]{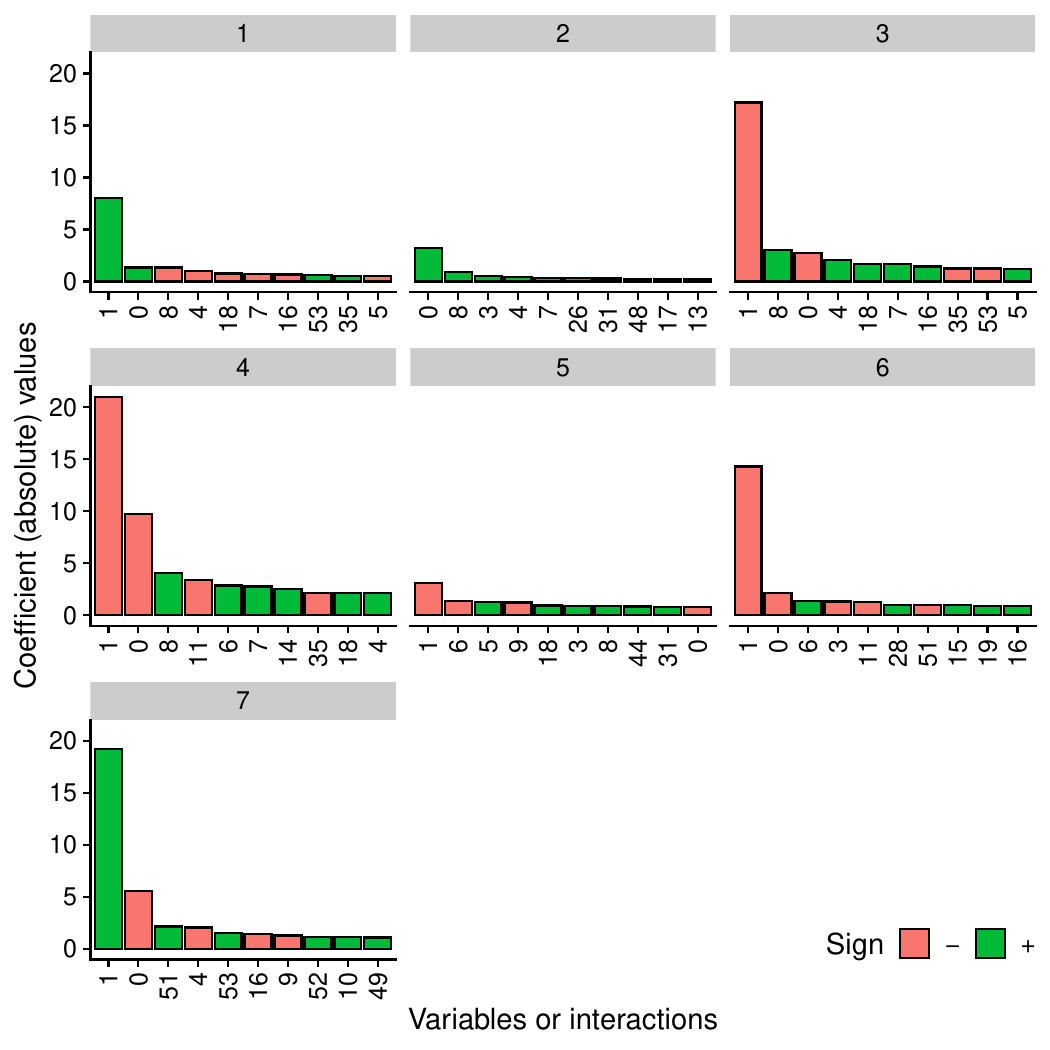}
	\caption{Polynomial coefficients obtained for each forest type (1-7) after applying NN2Poly on a NN trained on the Covertype data set. For each forest type, a different polynomial is obtained and the ten highest coefficients in absolute value are shown. Variables or interactions are represented with numbers in the horizontal axis, with red being negative coefficients and green positive ones.}
	\label{fig_covertype_coeffs}
\end{figure}

The obtained polynomials also increment the model interpretability, however in this case the coefficients are not linearly related with the actual prediction, as happened in the regression case. Rather, these coefficients effect on the class probabilities can be interpreted through the softmax function in an analogous manner as the classical interpretation of the coefficients in a logistic regression.  Figure \ref{fig_covertype_coeffs} contains, for visualization purposes, the 10 highest coefficients in absolute value of the polynomial associated with each class.  Note that this results are quite consistent with the forest type. The following relations exemplify this: Forest type 1 corresponds with the \textit{Spruce/Fir} forest type, which is characteristic of high elevation mountains, and its highest coefficient (positive value) is associated with variable 1, that is \textit{Elevation}. On the other hand, note how forest type 2, which corresponds with \textit{Lodgepole Pine} is the only type that does not present Elevation (1) as in its 10 most important features; this type of forest can be present from ocean shores to subalpine regions, therefore elevation is not a key feature when predicting this class. Finally, forest type 5 corresponds with \textit{Aspen} forest, which is typical of high plains and mountains; in this case, elevation (1) shows a negative coefficient, which is incorrect according to the previous forest description. However, this is the class that is never predicted by the NN, which can be seen in Table \ref{tab:confusionNN}. Therefore, the NN has not properly learned this forest type characteristics and NN2Poly allows to detect this problem by interpreting the coefficients and comparing them with the expectations from expert knowledge.

\section{Conclusions}

The proposed method, NN2Poly, obtains a polynomial representation that, under certain conditions, performs equivalently as a given densely-connected feed-forward neural network. This extends to deeper layers the work presented in \cite{moralaMathematicalFrameworkInform2021}, where it was solved for the case of a single hidden layer.

NN2Poly is first presented theoretically using an inductive approach that features Taylor expansions and the combinatorial properties of an MLP. The method only relies on the condition that the synaptic potentials fall in the acceptable range of the Taylor expansion, which can be achieved by imposing some weight constraints during the training phase and using certain activation functions like $\tanh()$. Furthermore, a practical and more efficient version is presented in Section \ref{section_practical_implementation}, which lightens some of the computational burdens by reducing the total order of the computed polynomial, and by simplifying the computation of multisets partitions. This shows promising results both for simulated and real tabular data in the context of regression and classification. However, there is still room for improving the efficiency of the method and increasing its application to even larger networks or higher dimension problems, as scalability still poses a challenge. Thus, advances in that regard should help expand the applications and simulations to higher-dimensional tabular data or non tabular data like images or audio.

Furthermore, future research should be conducted on improving some of the imposed constraints to obtain an efficient way to apply NN2Poly to a broader class of neural networks and activation functions. In this regard, the analysis of different regularization techniques is quite interesting, because depending on the selected technique, the effectiveness of NN2Poly may vary. Nevertheless, this analysis can be used to understand the impact of those constraints on the NNs through the obtained polynomials.

Several applications can be derived from this transformation from a NN into a polynomial. First of all, the number of parameters could be highly reduced, and this allows new interpretability and explainability techniques, as the coefficients of a polynomial have a classical statistical interpretation. Furthermore, the study of NN properties can benefit from this approach in several ways. For example, given that NN2Poly is able to generate a polynomial representation at every neuron that can be related to any other previous layer, this approach could be used to study partial internal representations learned by the NN. Moreover, these polynomials, as well as the output one, could be studied at different stages of the learning process to derive insightful conclusions. A rigorous study of these possibilities should emerge with the application of NN2Poly and the improvements in its efficiency.

\bibliographystyle{IEEEtran}
\bibliography{IEEEabrv,references/Main.bib}

\appendices

\section{Multiset partition algorithm}\label{appendix_multiset}

In this Appendix, details are presented on how to compute all possible partitions of a multiset $M$ (or multipartitions), associated to a combination of variables $\Vec{t}$, that are needed in $\pi(\Vec{t},Q,n)$ from equation Eq. \ref{eq_final_coeff_lemma}. Obtaining all of them can be done with Algorithm \ref{alg_knuth}, extracted from \cite{knuthArtComputerProgramming2005}. 

First, note that these multipartitions can be seen as sums of vectors with non-negative integer components, with each component representing each different element in the multiset, i.e., a representation of each partition using the notation $\Vec{t}$ from Eq \ref{eq_poly_vec_t}. Consider the example of multiset $M=\{1,1,2,3\}$. Then, some of its partitions are represented with vectors as shown in Table \ref{tab:multipartitions}.

With this representation, \cite{knuthArtComputerProgramming2005} presents a procedure to obtain all possible partitions of a given multiset (Algorithm \ref{alg_knuth}). Briefly, this algorithm uses tuples $(c,u,v)$ with $c$ representing the component of a multipartition vector representation, $u$ the remaining amount to be partitioned in component $c$ and $v$ the amount in the current partition. Then, the algorithm runs over all possible partitions, extracting the $v$ values for each of them.

Formally, these tuples are defined as follows. Define $(c,u,v)$ as 3-tuples representing a partition, where $c$ denotes the component index in the partition vector, which is equivalent to the integer representing the element in the multiset, $u > 0$ denotes the remaining in component $c$ that still needs to be partitioned, and $v$ represents the amount of the c component in the current partition, satisfying $0 \leq v \leq u$. Define $m$ as the number of elements in a multiset $\{n_1\cdot 1,\dots, n_m \cdot m\}$, and $n=n_1+\dots + n_m$. Then these 3-tuples are stored as arrays $(c_0,c_1,\dots, c_{nm})$, $(u_0,u_1,\dots, u_{nm})$, $(v_0,v_1,\dots, v_{nm})$ of length $m \cdot n$. An extra array $(f_0,f_1,\dots, f_{n})$ is used as a ``stack'' such that the $(l+1)$st vector of the partition is formed by the elements from $f_l$ to $f_{l+1}-1$ in the $c,u,v$ arrays. Then, using this arrays, Algorithm \ref{alg_knuth} obtains all possible partitions of a given multiset.

\begin{table}[H]
    \centering
    \caption{Examples of multipartitions and their vector representation.}
    \label{tab:multipartitions}
    \begin{tabular}{c|c}
        \hline
        Multipartition &  Vector representation\\
        \hline
        
        $\{1,1,2,3\}$ &
        $\begin{pmatrix}2, 1, 1\end{pmatrix}$\\
        
        $\{1\},\{1,2,3\}$ &
        $\begin{pmatrix}1,  0, 0\end{pmatrix}+
        \begin{pmatrix}1, 1, 1\end{pmatrix}$\\
        
        $\{1,1\},\{2,3\}$ &
        $\begin{pmatrix}2, 0, 0\end{pmatrix}+
        \begin{pmatrix}0, 1, 1\end{pmatrix}$\\
        
        $\{1\},\{2\},\{1,3\}$ &
        $\begin{pmatrix}1, 0, 0\end{pmatrix}+
        \begin{pmatrix}0, 1, 0\end{pmatrix}+
        \begin{pmatrix}1, 0, 1\end{pmatrix}$\\
        
        \hline
    \end{tabular}
\end{table}

\begin{algorithm}[H]
\small
\caption{\\Multipartitions in decreasing lexicographic order \cite{knuthArtComputerProgramming2005}}
\label{alg_knuth}
\begin{algorithmic}[1]
    \vspace{1mm}
    \phase{Initialize}
    \For{$0 \le j < m$}
        \State Set $c_j \leftarrow j+1$, $u_j \leftarrow v_j \leftarrow n_{j+1}$;
    \EndFor
    \State Set $f_0 \leftarrow a \leftarrow l \leftarrow 0$, $f_1 \leftarrow b \leftarrow m$;
    \phase{Subtract $v$ from $u$, $x$ will denote if $v$ has changed}
    \State Set $j \leftarrow a$, $k \leftarrow b$, $x \leftarrow 0$,
    \While{$j < b$}
        \State $u_k \leftarrow u_j - v_j$;
        \If{$u_k = 0$}
            \State Set $x \leftarrow 1$, $j \leftarrow j+1$;
        \ElsIf{$x=0$}
            \State $c_k \leftarrow c_j$, $v_k <- \min(v_j,u_k)$, $x \leftarrow [u_k < v_j]$, $ k \leftarrow k+1, j \leftarrow j+1$;
        \Else
            \State Set $c_{k} \leftarrow c_{j}, v_{k} \leftarrow u_{k}, k \leftarrow k+1, j \leftarrow j+1$;
        \EndIf
    \EndWhile
    \phase{Push if nonzero}
    \If{$k > b$}
    \State Set $a \leftarrow b$, $b \leftarrow k$, $l \leftarrow l+1$, $f_{l+1} \leftarrow b$;
    \State Return to \textbf{Part 2};
    \EndIf
    \phase{Visit a partition} 
    \For{$0 \le k \le l$}
        \For{$f_k \le j < f_{k+1}$}
            \State The vector contains $v_j$ in component $c_j$
        \EndFor
    \EndFor
    \phase{Decrease $v$}
    \State Set $j \leftarrow b - 1$;
    \While{$v_j=0$}
        \State Set $j \leftarrow j - 1$;
    \EndWhile
    \If{$j=a$ and $v_j=1$}
        \State Go to \textbf{Part 6};
    \Else
        \State Set $v_j \leftarrow v_{j}-1$;
        \For{$j < k < b$}
        \State Set $v_k \leftarrow u_k$;
        \EndFor
        \State Return to \textbf{Part 2};
    \EndIf
    \phase{Backtrack}
    \If{l=0}
    \State Terminate;
    \Else
    \State Set $l \leftarrow l - 1$, $b \leftarrow a$, $a \leftarrow f_l$;
    \State Return to \textbf{Part 5};
    \EndIf
\end{algorithmic}
\end{algorithm}

\section{Real data sets}\label{appendix_data sets}

Tables \ref{tab:boston} and \ref{tab:covertype} present the predictor variables descriptions for the Boston Housing and the Forest Covertype data sets. Table \ref{tab:covertype:response} contains the seven forest type identifiers for the response variable in the Covertype data set.

\begin{table}[ht]
    \centering
    \small
    \caption{Predictor variable identifiers and descriptions for the Boston Housing data set.}
    \label{tab:boston}
    \begin{tabular}{c|p{0.86\linewidth}}
    \hline
    \textbf{\#} & \textbf{Description}\\
    \hline
    1 & Per capita crime rate.\\
    2 & The proportion of residential land zoned for lots over 25,000 square feet.\\
    3 & The proportion of non-retail business acres per town.\\
    4 & Charles River dummy variable (= 1 if tract bounds river; 0 otherwise).\\
    5 & Nitric oxides concentration (parts per 10 million).\\
    6 & The average number of rooms per dwelling.\\
    7 & The proportion of owner-occupied units built before 1940.\\
    8 & Weighted distances to five Boston employment centers.\\
    9 & Index of accessibility to radial highways.\\
    10 & Full-value property-tax rate per \$10,000.\\
    11 & Pupil-teacher ratio by town.\\
    12 & 1000 * (Bk - 0.63) ** 2 where Bk is the proportion of Black people by town.\\
    13 & Percentage lower status of the population.\\
    \hline
    \end{tabular}
\end{table}

\begin{table}[ht]
    \centering
    \small
    \caption{Predictor variable identifiers and descriptions for the Covertype data set.}
    \label{tab:covertype}
    \begin{tabular}{c|p{0.86\linewidth}}
    \hline
    \textbf{\#} & \textbf{Description}\\
    \hline
    1 & Elevation in meters.\\
    2 & Aspect in degrees azimuth.\\
    3 & Slope in degrees.\\
    4 & Horizontal distance to nearest surface water features.\\
    5 & Vertical distance to nearest surface water features.\\
    6 & Horizontal distance to nearest roadway.\\
    7 & Hillshade index at 9am, summer solstice.\\
    8 & Hillshade index at noon, summer solstice.\\
    9 & Hillshade index at 3pm, summer solstice.\\
    10 & Horizontal distance to nearest wildfire ignition points.\\
    11 - 14 & Wilderness area designation (4 binary columns).\\
    15 - 54 & Soil Type designation (40 binary columns).\\
    \hline
    \end{tabular}
\end{table}

\begin{table}[ht]
    \centering
    \small
    \caption{Response variable identifiers and descriptions for the Covertype data set (forest types).}
    \label{tab:covertype:response}
    \begin{tabular}{c|p{0.86\linewidth}}
    \hline
    \textbf{\#} & \textbf{Description}\\
    \hline
    1 & Spruce/Fir.\\
    2 & Lodgepole Pine.\\
    3 & Ponderosa Pine.\\
    4 & Cottonwood/Willow.\\
    5 & Aspen.\\
    6 & Douglas-fir.\\
    7 & Krummholz.\\
    \hline
    \end{tabular}
\end{table}

\end{document}